\documentclass{article} % For LaTeX2e
\usepackage{iclr2027_conference,times}

\usepackage{amsmath,amsfonts,bm}

\def\eqref#1{equation~\ref{#1}}
\def\1{\bm{1}}

\DeclareMathAlphabet{\mathsfit}{\encodingdefault}{\sfdefault}{m}{sl}
\SetMathAlphabet{\mathsfit}{bold}{\encodingdefault}{\sfdefault}{bx}{n}

\usepackage{url}

\usepackage[utf8]{inputenc} % allow utf-8 input
\usepackage[T1]{fontenc}    % use 8-bit T1 fonts
\usepackage[colorlinks=true, citecolor=blue]{hyperref}
\usepackage{booktabs}       % professional-quality tables
\usepackage{amsfonts}       % blackboard math symbols
\usepackage{nicefrac}       % compact symbols for 1/2, etc.
\usepackage{microtype}      % microtypography
\usepackage[table,dvipsnames]{xcolor}         % colors
\usepackage{graphicx}
\usepackage{subcaption}
\usepackage{enumitem}
\usepackage{amsmath}
\usepackage{multirow}
\usepackage{algorithm}
\usepackage{algorithmic}
\usepackage{wrapfig}
\usepackage{adjustbox}
\usepackage{amsthm}

\newcommand{\mystd}[1]{\footnotesize{\textpm{ #1}}}

\title{Hub for Outliers, Spokes for Inliers: Uniform Latent Space Construction for Dual-Mismatched Semi-Supervised Learning}

\author{Li Yuan\textsuperscript{1*}, Yaxin Hou\textsuperscript{1*}, Jiawei Tang\textsuperscript{1}, Yongbiao Gao\textsuperscript{2,3}, Yuheng Jia\textsuperscript{1,4,5$\dagger$} \\
\textsuperscript{1}School of Computer Science and Engineering, Southeast University, Nanjing, China \\
\textsuperscript{2}Key Laboratory of Computing Power Network and Information Security, Ministry of Education,  \\
Shandong Computer Science Center (National Supercomputer Center in Jinan), Qilu University of \\
Technology (Shandong Academy of Sciences), Jinan, China \\
\textsuperscript{3}Shandong Provincial Key Laboratory of Computing Power Internet and Service Computing, \\
Shandong Fundamental
Research Center for Computer Science, Jinan, China \\
\textsuperscript{4}Key Laboratory of New Generation Artificial Intelligence Technology and Its \\
Interdisciplinary Applications (Southeast University), Ministry of Education, China \\
\textsuperscript{5}School of Computing Information Sciences, Saint Francis University, Hong Kong, China \\
\texttt{\{yuan-li,yaxin,230259148,yhjia\}@seu.edu.cn}, \\
\texttt{gaoyb@qlu.edu.cn}
}

\newtheoremstyle{mytheorem}
  {3pt}{3pt}
  {\itshape}
  {}
  {\bfseries}
  {.}
  { }
  {}

\theoremstyle{mytheorem}

\newtheorem{lemma}{Lemma}
\newtheorem{theorem}{Theorem}

\iclrfinalcopy 
\begin{document}

\maketitle

\begingroup
\renewcommand{\thefootnote}{\fnsymbol{footnote}}
\footnotetext[1]{These authors contributed equally to this work.}
\footnotetext[2]{Corresponding author.}
\endgroup

\begin{abstract}

Semi-supervised learning typically assumes that labeled and unlabeled data share an identical class distribution and label space. However, this setting is often violated: unlabeled data may be imbalanced and contain unknown class samples, causing mismatches in both class distribution and label space. Such dual mismatch leads to majority classes dominating the latent space and unknown class samples being overconfidently misclassified, degrading feature discriminability and pseudo-label quality. To address this, we propose a hub-spoke latent geometry, where known classes are uniformly distributed around a central hub and each class forms compact clusters around its prototype, while the hub provides an anchor for a low-evidence region specifically designed for high-uncertainty unknown class samples. Integrated with an evidence-based classifier, this geometry ultimately enhances feature discriminability and uncertainty separation by mitigating majority-class domination through structured feature organization and guiding high-uncertainty unknown class samples toward the hub. Extensive experiments show that our method outperforms state-of-the-art methods, with a maximum improvement of \textbf{3.25\%} across various settings. 

\end{abstract}
  
\section{Introduction}
\label{sec:intro}

Semi-supervised learning (SSL) effectively leverages large amounts of unlabeled data to improve model performance when labeled data is scarce. Representative SSL techniques, such as pseudo-labeling~\citep{Tri-net, CL}, consistency regularization~\citep{VAT, DS}, or a combination of both~\citep{CGMatch, SimPro} have demonstrated considerable success by utilizing the information within unlabeled data. However, these methods typically rely on the assumption that the labeled and unlabeled data share both a consistent class distribution (i.e., both are balanced) and an identical label space (i.e., no unknown class samples exist in the unlabeled data). \textbf{In practice, however, the unlabeled data often exhibits mismatches in both class distribution and label space} (i.e., the unlabeled data is imbalanced and contains unknown class samples), as illustrated in the rightmost panel of Fig.~\ref{fig:setting}. Existing SSL methods lack an effective strategy to handle these realistic and challenging settings and therefore suffer from significant performance degradation.

\begin{figure}[t]
\centering
\includegraphics[width=1.0\textwidth]{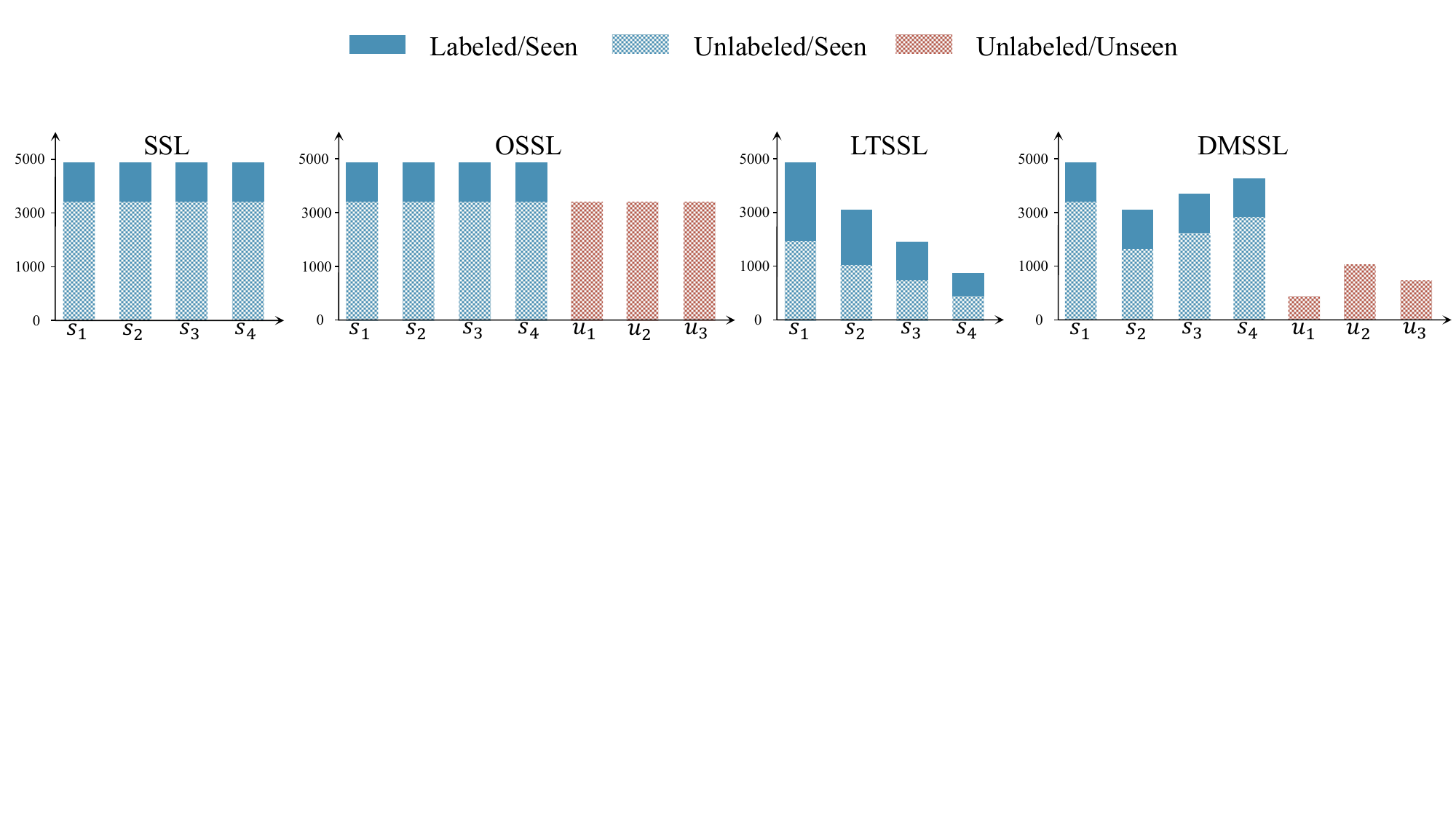}
\vspace{-4.7cm}
\caption{A comparison of the class distributions in semi-supervised learning (SSL), open-set semi-supervised learning (OSSL), long-tailed semi-supervised learning (LTSSL), and the dual-mismatched semi-supervised learning (DMSSL) scenarios. DMSSL considers a more realistic scenario where the labeled data is balanced, while the unlabeled data is imbalanced and contains samples from both known and unknown classes.}
\label{fig:setting}
\vspace{-0.3cm}
\end{figure}

 While existing long-tailed semi-supervised learning (LTSSL) methods tackle class distribution mismatch through approaches like auxiliary classifiers~\citep{Hleash, CPE, Meta-Expert, SAGE} and distribution alignment~\citep{DARP, FlexDA}, they often overlook the underlying feature distribution. We empirically observe that imbalanced class distribution causes majority classes (e.g., \textcolor[HTML]{8C564B}{class 5}) to dominate the feature representation space. This dominance compresses the representation space of minority classes (e.g., \textcolor[HTML]{D62728}{class 3} and \textcolor[HTML]{7F7F7F}{class 7}), leading to overlapping class boundaries (Fig.~\ref{fig:motivation_tsne} left). This degradation limits the model's discriminative capability and leads to low-quality pseudo-labels. Meanwhile, to handle label space mismatch, existing methods rely heavily on softmax-based confidence scores to filter out unknown class samples to prevent negative interference~\citep{OpenMatch, MTCF} or leverage them positively~\citep{IOMatch, CaliMatch}. However, such scores lead to overconfident misclassifications of unknown class samples as known class ones, thereby degrading pseudo-label accuracy (Fig.~\ref{fig:pseudo_label}).

The aforementioned challenges posed by dual mismatch motivate us to explore a  solution that jointly improves known class discriminability and handles unknown classes. To this end, we explicitly organize known and unknown classes in a structured latent space by constructing a uniform hub-spoke geometry (Fig.~\ref{fig:motivation_geometry}), where uniformly distributed spokes represent prototypes for known classes and a central hub provides the anchor for a low-evidence region designed for high-uncertainty unknown classes. By encouraging known classes to spread uniformly around the hub and samples to cluster around their corresponding prototypes (Fig.~\ref{fig:motivation_tsne} right), the geometry promotes inter-class separation and intra-class compactness, mitigating representation bias caused by class imbalance. Building upon this geometry, we further introduce an evidence-based classifier to quantify predictive uncertainty. Specifically, unlabeled samples with high uncertainty which exhibit low similarity to known classes are guided toward the central hub. Our theoretical analysis shows that these high-uncertainty candidate samples, combined with hub-centered clustering, induce a low-evidence region around the hub, providing theoretical support for this uncertainty-guided modeling (see Sec.~\ref{sec:ana} for more details). Consequently, this modeling improves the separation between known and unknown classes in the feature space and their uncertainty distributions, as illustrated in Fig.~\ref{fig:motivation_evidence}.

Our main contributions can be summarized as follows:
\begin{itemize}[leftmargin=22pt]

\item We identify and tackle the realistic dual‑mismatched semi‑supervised learning scenario, where the unlabeled data simultaneously suffers from imbalanced class distribution and the presence of unknown classes.

\item We propose the hub-spoke geometry which enhances feature discriminability under class imbalance and provides a low-evidence region for unknown class samples, with a theoretical characterization showing how high-uncertainty candidates and hub-centered clustering induce this region.

\item Extensive experiments on CIFAR-10, CIFAR-100, SVHN, Food-101, and ImageNet-100 datasets under various dual-mismatched conditions demonstrate that our proposed method achieves new state-of-the-art (SOTA) performance gains by up to \textbf{3.25\%}.
\end{itemize}

\begin{figure*}[t]
    \centering
  
    \newcommand{\picrowwidth}{0.85\textwidth}

    \begin{minipage}{\picrowwidth}
        \centering
        \begin{subfigure}{0.34\linewidth}
            \centering
            \includegraphics[trim=100pt 20pt 150pt 0pt, clip, width=1.0\linewidth]{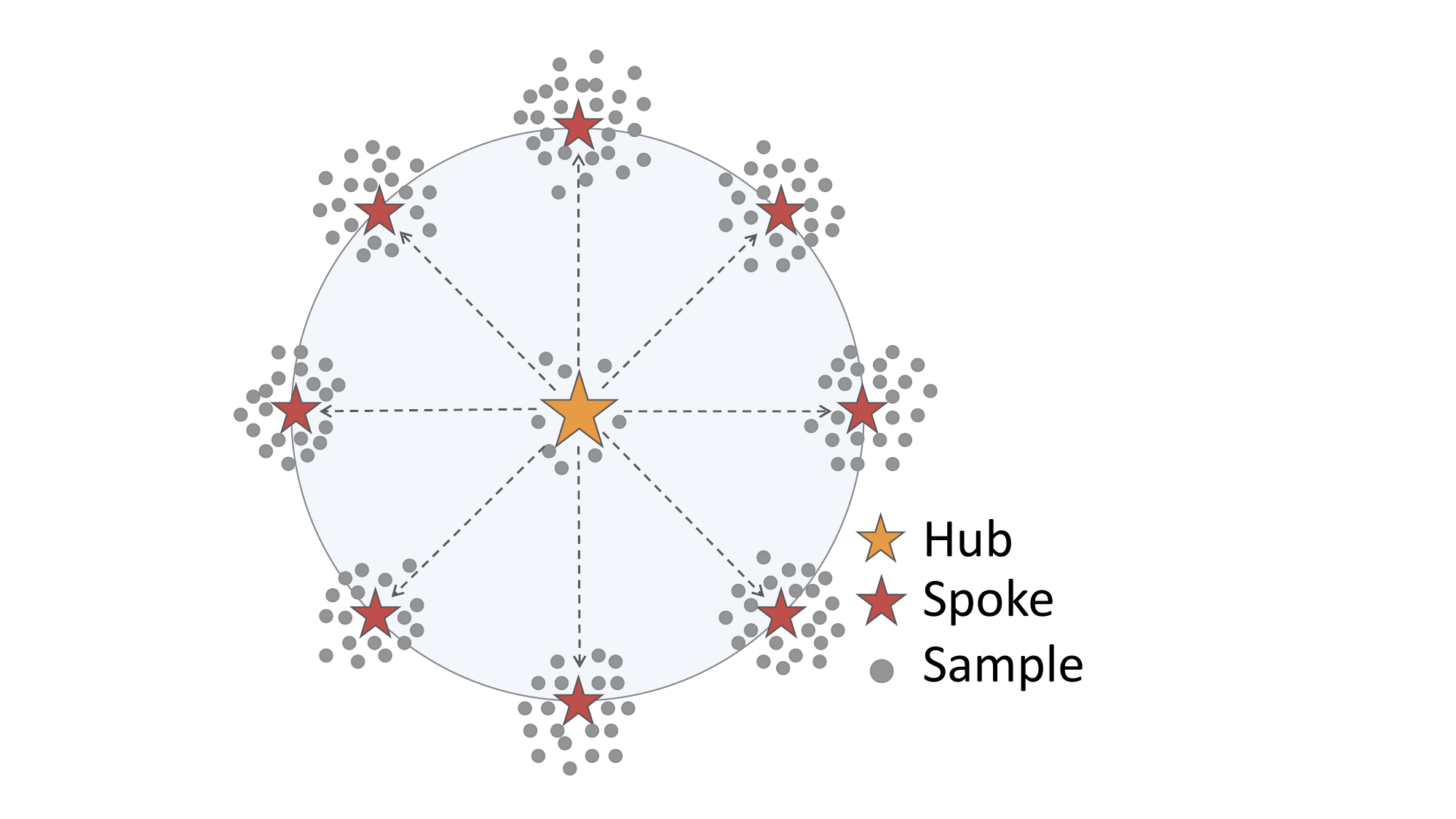}
            \caption{}
            \label{fig:motivation_geometry}
        \end{subfigure}
        \hfill
        \begin{subfigure}{0.63\linewidth} 
            \centering
            \includegraphics[width=0.47\linewidth]{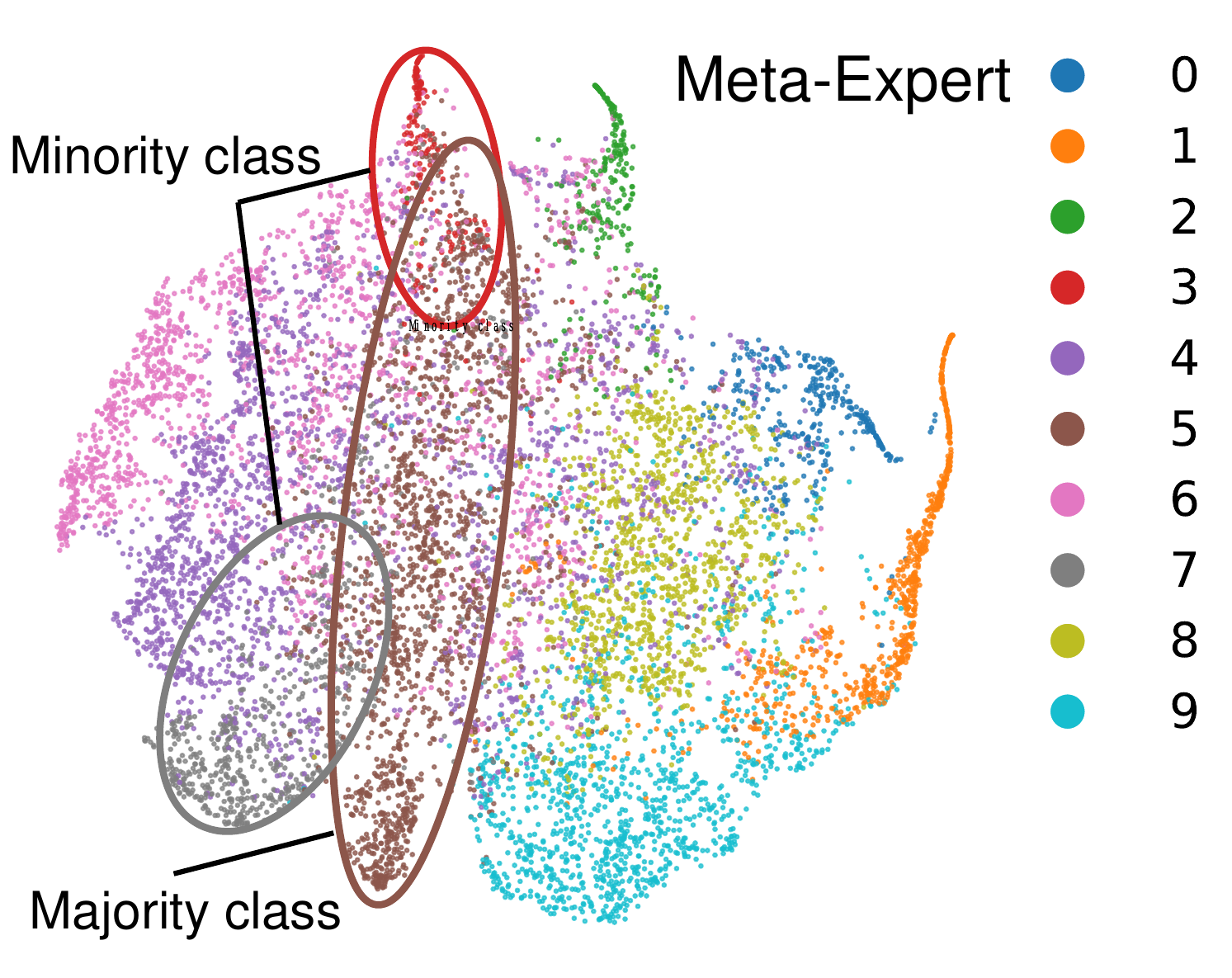}\hfill
            \includegraphics[width=0.47\linewidth]{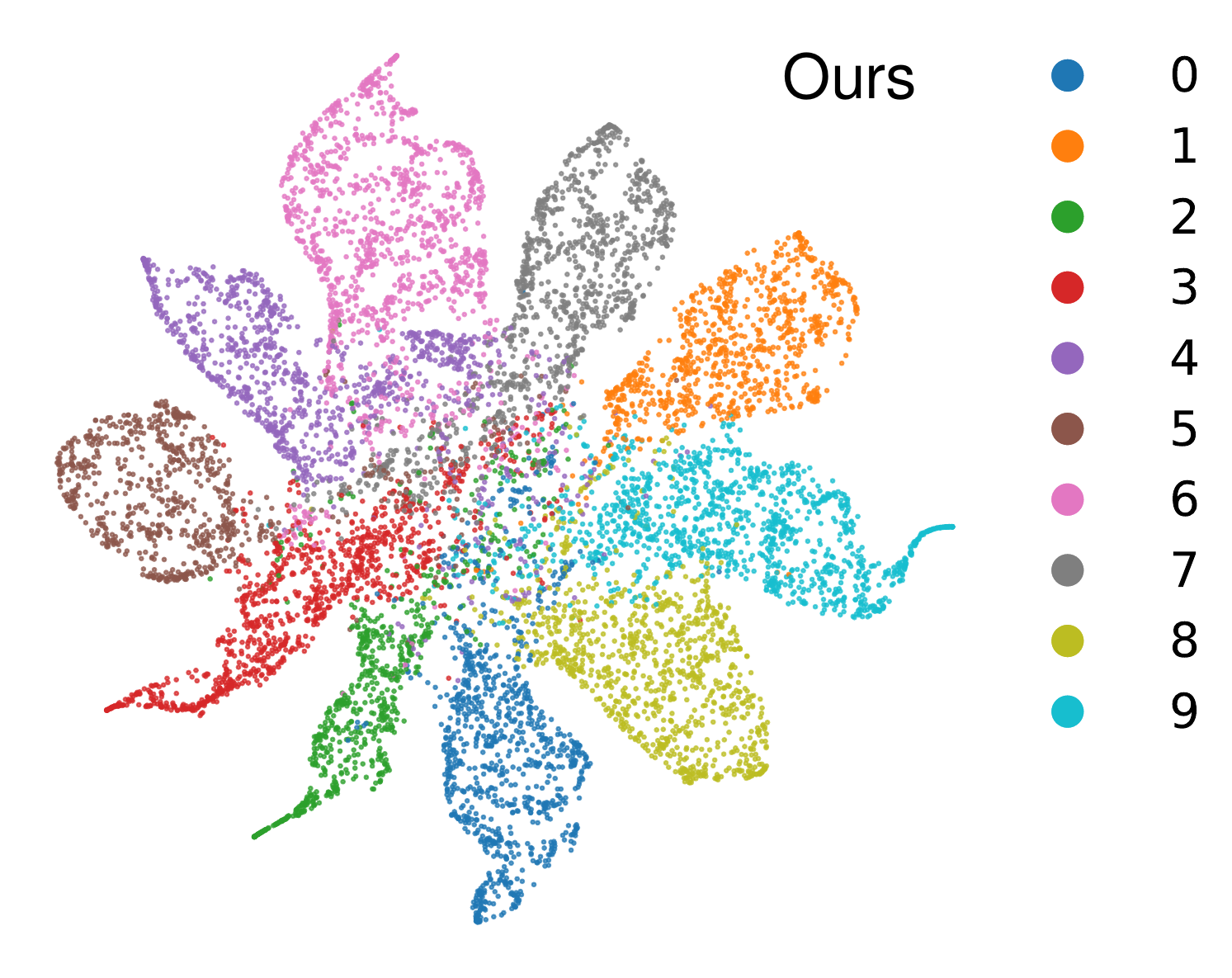}
            \caption{}
            \label{fig:motivation_tsne}
        \end{subfigure}
    \end{minipage}
    
    \vspace{0.2cm}

    \begin{minipage}{\picrowwidth}
        \centering
        \begin{subfigure}{0.34\linewidth}
            \centering
            \includegraphics[width=\linewidth]{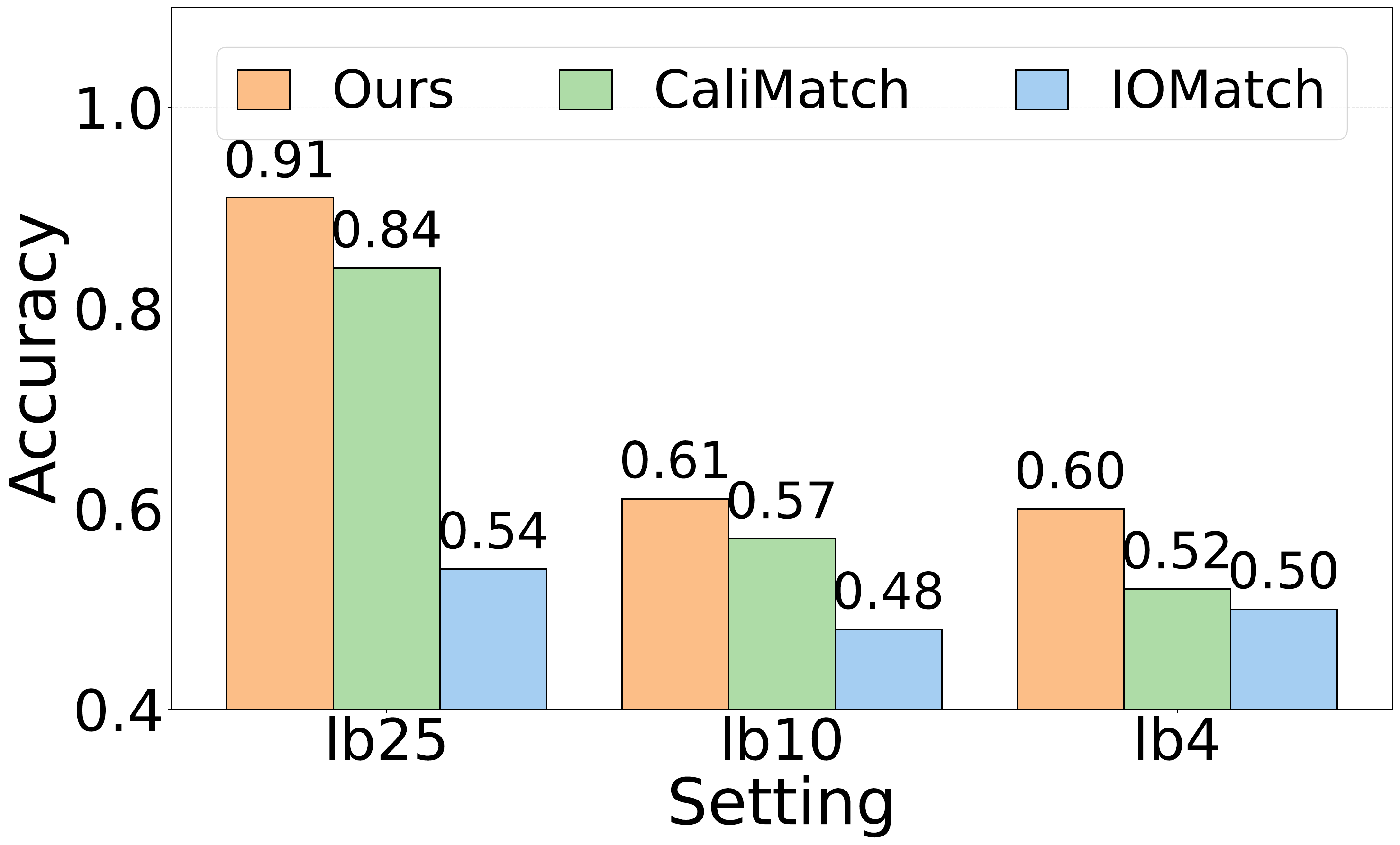}
            \caption{}
            \label{fig:pseudo_label}
        \end{subfigure}
        \hfill
        \begin{subfigure}{0.63\linewidth}
            \centering
            \includegraphics[width=0.47\linewidth]{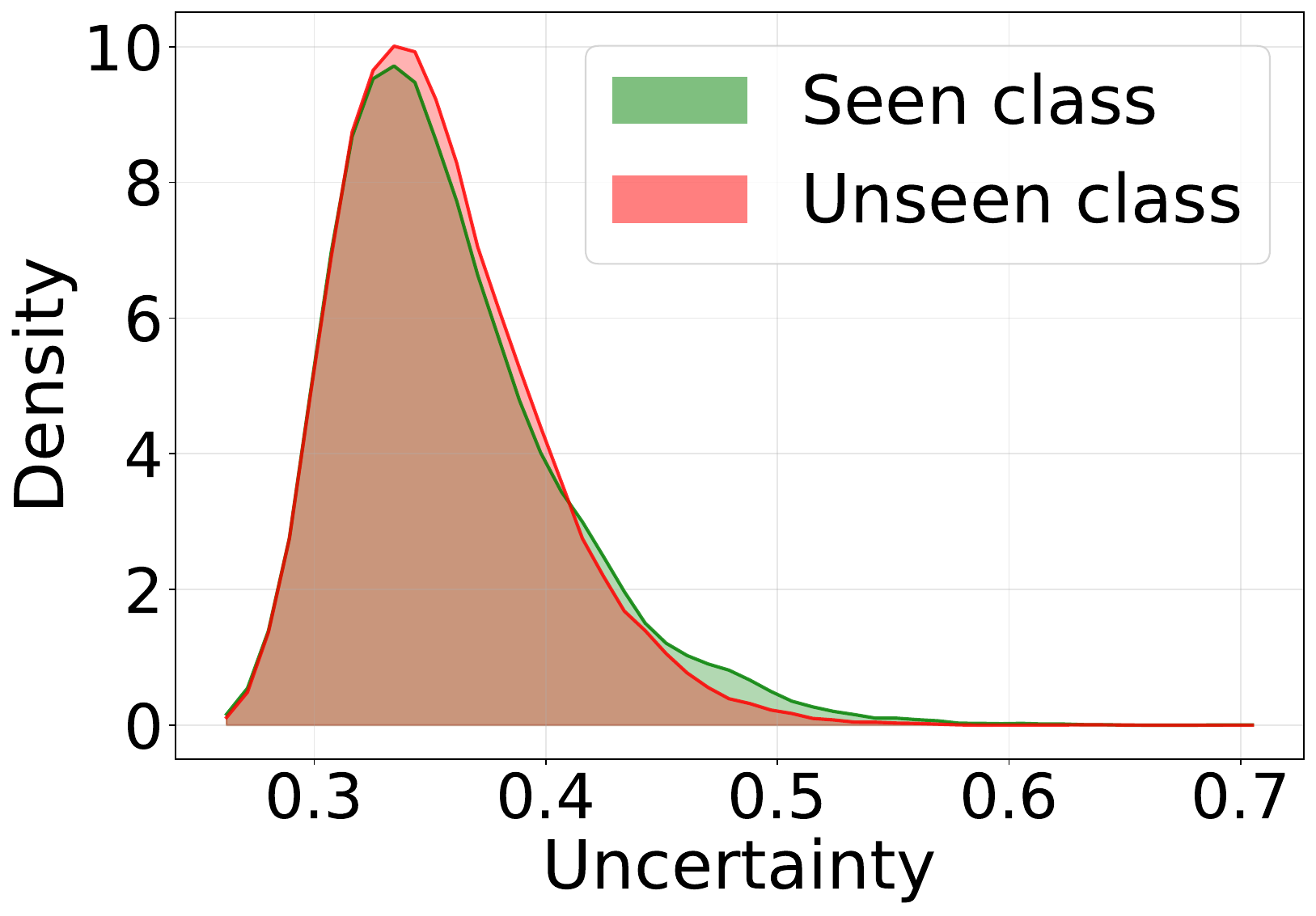}\hfill
            \includegraphics[width=0.49\linewidth]{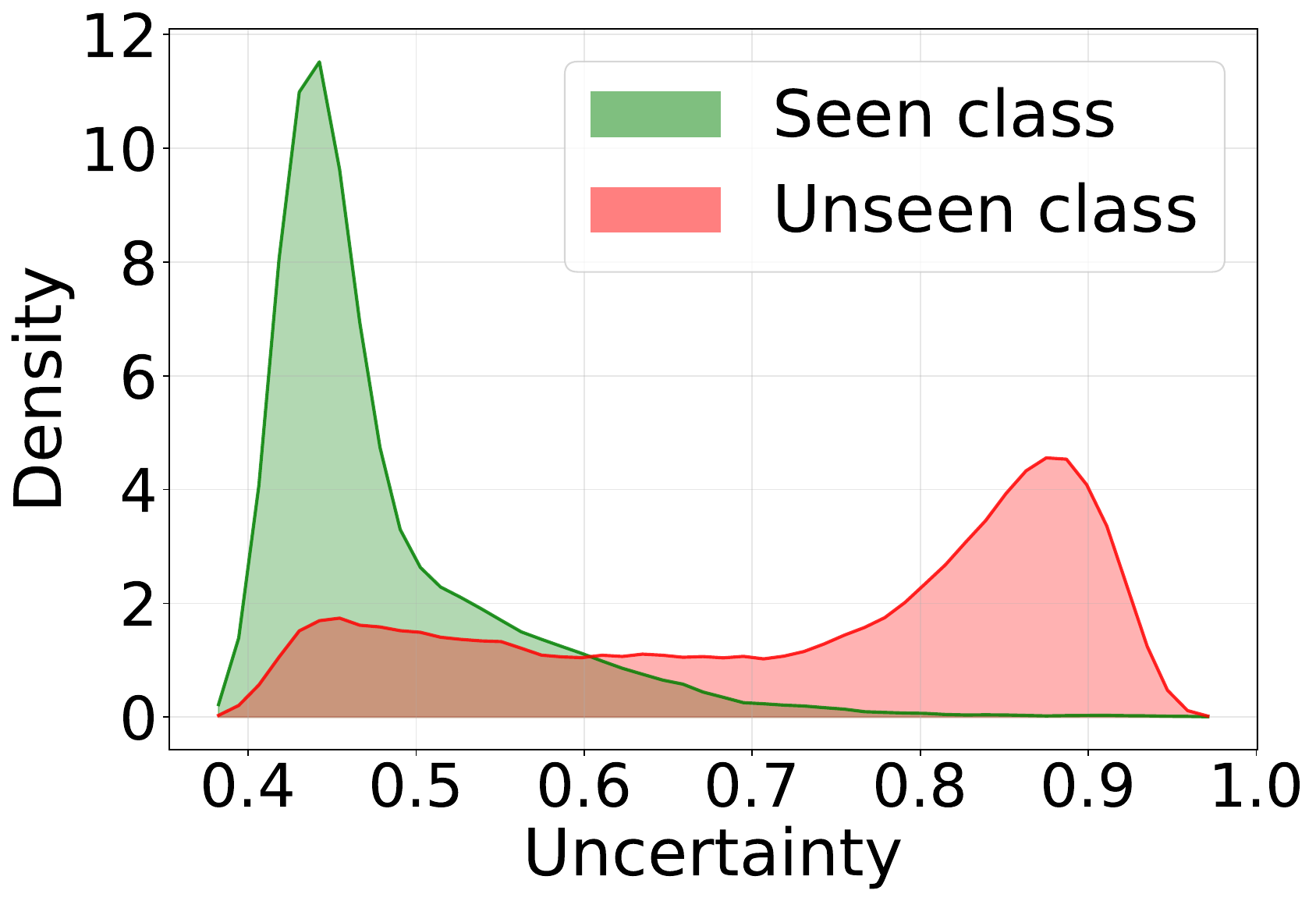}
            \caption{}
            \label{fig:motivation_evidence}
        \end{subfigure}
    \end{minipage}

    \vspace{-0.2cm}
    \caption{(a) presents the proposed hub-spoke geometry; (b) presents the t-SNE visualizations of CIFAR-10~\citep{CIFAR} test set for Meta-Expert~\citep{Meta-Expert} and our proposed method, respectively; (c) shows a comparison of the pseudo-label accuracy on CIFAR-10 among IOMatch~\citep{IOMatch}, CaliMatch~\citep{CaliMatch}, and our method across three settings, with 4 (lb4), 10 (lb10), and 25 (lb25) labeled samples per class, respectively; (d) presents the uncertainty distributions on CIFAR-10 for the cases without guiding and with guiding high‑uncertainty unknown class samples toward the central hub in the hub-spoke geometry, respectively.}
    \label{fig:combined_motivation}
    \vspace{-0.5cm}
\end{figure*}

\section{Related Work}
\label{sec:rel}

\subsection{Long-Tailed Semi-Supervised Learning}

Existing long-tailed semi-supervised learning (LTSSL) methods primarily aim to mitigate model bias toward majority classes during the pseudo-labeling process, and these methods can be broadly categorized into two groups. The first focuses on pseudo-label refinement and distribution alignment. For instance, DARP~\citep{DARP} refines biased pseudo-labels via convex optimization, while DASO~\citep{DASO} leverages the complementary properties of linear and semantic classifiers. Moreover, ACR~\citep{ACR} dynamically calibrates pseudo-labels based on the distance between the estimated unlabeled data distribution and anchor distributions, and CReST~\citep{CReST} employs a progressive distribution alignment strategy to adjust its rebalancing intensity. The second involves structural or strategic adjustments to enforce balance. ABC~\citep{ABC} introduces an auxiliary classifier to alleviate model bias toward majority classes, and CDMAD~\citep{CDMAD} designs a debiasing mechanism by using pattern-free images. More recently, methods such as CPE~\citep{CPE} train multiple experts, each specialized for a specific unlabeled data distribution, to generate higher-quality pseudo-labels. Building on this, Meta-Expert~\citep{Meta-Expert} proposes dynamic expert assignment to better incorporate individual expert strengths. \textbf{Despite their effectiveness in mitigating bias under class imbalance, these methods exhibit limited performance in realistic scenarios where unknown class samples exist in the unlabeled data.}

\subsection{Open-Set Semi-Supervised Learning}

A parallel research direction, known as open-set semi-supervised learning (OSSL), explicitly tackles scenarios where unlabeled data contains unknown class samples. OpenMatch~\citep{OpenMatch} leverages one-vs-all classifiers to isolate unknown class samples from known class ones. IOMatch~\citep{IOMatch} unifies unknown class samples into a new class via a multi-binary classifier, facilitating joint training of closed-set and open-set classifiers. Se-FOSS~\citep{Se-FOSS} utilizes self-supervised learning techniques and energy-based scores to learn from all known and unknown class samples, while SCOMatch~\citep{SCOMatch} explicitly selects reliable unknown class samples to form an additional class and refines the decision boundary through simultaneous closed-set and open-set self-training. Most recently, CaliMatch~\citep{CaliMatch} addresses the critical issue of overconfidence by simultaneously calibrating the classifier and unknown class sample detector. \textbf{However, these methods assume a balanced class distribution and consequently perform poorly when unlabeled data is imbalanced.}

Further discussion and analysis of related work is provided in Appendix~\ref{sec:more_related_work}.

\section{Preliminaries}
\label{sec:prel}

\noindent\textbf{Problem formulation.} This paper studies semi-supervised learning involving mismatches in both class distribution and label space between labeled and unlabeled data, which we term dual-mismatched semi-supervised learning (DMSSL). In DMSSL, we consider both a labeled dataset $\mathcal{D}_l = \{(x_i^l, y_i^l)\}_{i=1}^N$ and an unlabeled dataset $\mathcal{D}_u = \{(x_j^u, y_j^u)\}_{j=1}^M$. Here, $x_i^l$ denotes the $i$-th labeled sample with its ground-truth label $y_i^l \in \{1, \ldots, C\}$, while $x_j^u$ represents the $j$-th unlabeled sample with its inaccessible ground-truth label $y_j^u \in \{1, \ldots, C, \ldots, C_{IO}\}$, where $C_{IO} = C + C_O$ is the total number of classes, including $C$ known and $C_O$ unknown classes. $\mathcal{D}_u$ can be decomposed as $\mathcal{D}_u = \mathcal{D}_I \cup \mathcal{D}_O$, where $\mathcal{D}_I$ and $\mathcal{D}_O$ contain known and unknown class samples, respectively. Furthermore, the labeled data $\mathcal{D}_l$ follow a balanced class distribution, whereas the unlabeled data $\mathcal{D}_u$ follow an imbalanced class distribution. Let $N_c$ denote the number of labeled samples in class $c$, defining the labeled data imbalance ratio as $\gamma_l = \frac{\max_c N_c}{\min_c N_c} = 1$. Theoretically, we can assume that $M_c$ represents the number of samples for class $c$ in $\mathcal{D}_u$, and define its imbalance ratio as $\gamma_u = \frac{\max_c M_c}{\min_c M_c} > 1$.

\noindent\textbf{Base semi-supervised learning framework.} Common semi-supervised learning methods can be summarized as follows. Given a batch of $B_l$ labeled samples $\mathcal{X}^l = \{(x_i^l, y_i^l)\}_{i=1}^{B_l}$, a weak augmentation $\Omega_w(\cdot)$ is applied to each sample before being fed to an encoder $\Phi(\cdot)$ that extracts feature representations $z = \Phi(\Omega_w(x))$, which are mapped to logits by a fully-connected layer $f(\cdot)$ to produce class probability distributions $p = f(z)$. The model's predictions on these labeled samples are supervised by the standard cross-entropy loss $\mathcal{H}(\cdot)$ against their ground-truth labels:
\begin{equation}
\mathcal{L}_{sup} = \frac{1}{B_l} \sum_{i=1}^{B_l} \mathcal{H}(p_i, y_i).
\label{eq:sup}
\end{equation}

Given a batch of $B_u$ samples $\mathcal{X}^u = \{x_j^u\}_{j=1}^{B_u}$, each sample is transformed by both a weak augmentation $\Omega_w(\cdot)$ and a strong augmentation $\Omega_s(\cdot)$, and the resulting two views are then processed to yield class probability distributions $p^w$ and $p^s$, respectively. The unsupervised loss is then defined as the cross-entropy loss $\mathcal{H}(\cdot)$ between the prediction from the strong augmentation view and the pseudo-label generated from the weak augmentation view:
\begin{equation}
\mathcal{L}_{unsup} = \frac{1}{B_u} \sum_{j=1}^{B_u} \mathbb{I} (\max (p_j^w) > \tau) \mathcal{H}(p_j^s, \hat{p}_j^w),
\label{eq:unsup}
\end{equation}
where $\hat{p}_j^w$ is the pseudo-label for the $j$-th unlabeled sample in the current batch, $\tau$ is the confidence threshold, and $\mathbb{I}(\cdot)$ denotes a binary sample mask that selects pseudo-labels with confidence larger than $\tau$. This ensures that only high-confidence pseudo-labels contribute to the training process.
\section{Proposed Method}
\label{sec:meth}

We address the dual-mismatched semi-supervised learning scenario where unlabeled data suffers from both class imbalance and the presence of unknown class samples. Our method constructs a discriminative feature representation space via a hub-spoke geometry. Meanwhile, it identifies high-uncertainty unknown class samples by leveraging evidence-based uncertainty and models them by guiding their features toward the central hub. 

\subsection{Hub-spoke Geometry Construction}

As illustrated in Fig.~\ref{fig:motivation_tsne} left, we observe that imbalanced class distribution leads to majority classes (e.g., \textcolor[HTML]{8C564B}{class 5}) dominating the feature representation space and compressing the feature representation space of minority classes (e.g., \textcolor[HTML]{D62728}{class 3} and \textcolor[HTML]{7F7F7F}{class 7}), thereby causing overlapping class boundaries. This limits feature discriminability and leads to low-quality pseudo-labels. To counteract this, we propose both a dual regularizer that structures the feature representation space into a hub-spoke geometry from global and local perspectives, and a class-adaptive re-weighting strategy to dynamically focus on minority classes.

\noindent\textbf{Global structure regularization.} To prevent majority classes from dominating the feature representation space and amplifying model bias, we enforce a globally uniform feature distribution, where all known classes are uniformly distributed around a central hub. We achieve this through two losses designed to explicitly shape the global geometry.

First, an inter-class separation loss $\mathcal{L}_{ig}$ is introduced to directly maximize the angular distances between labeled class prototypes. Formally, $\mathcal{L}_{ig}$ is defined as:
\begin{equation}
\mathcal{L}_{ig} = \frac{1}{|C|(|C|-1)} \sum_{i=1}^{|C|} \sum_{\substack{j=1,\\ j \neq i}}^{|C|} \left(\frac{\mu_i^l \cdot \mu_j^l}{\|\mu_i^l\|_2 \|\mu_j^l\|_2} + 1 \right)^2,
\end{equation}
where $\mu_i^l$ denotes the labeled class prototype for the $i$-th class, $\cdot$ denotes the dot product, and $\|\cdot\|_2$ denotes the $L_2$ norm.

Second, a prototype geometry loss $\mathcal{L}_{og}$ regulates the spatial arrangement of all labeled class prototypes relative to the global labeled class prototype $\bar{\mu}^l = \frac{1}{|C|} \sum_{i=1}^{|C|} \mu_i^l$. Formally, $\mathcal{L}_{og}$ is defined as:
\begin{equation}
\mathcal{L}_{og} = -\log\left(\frac{1}{|C|} \sum_{i=1}^{|C|} d_i \right) + \frac{1}{|C|} \sum_{i=1}^{|C|} (d_i - \bar{d})^2,
\end{equation}
where $d_i = \|\mu_i^l - \bar{\mu}^l\|_2$ is the distance from the $i$-th labeled class prototype to the global labeled class prototype, and $\bar{d} = \frac{1}{|C|} \sum_{i=1}^{|C|} d_i$ is the average distance.

Accordingly, the global structure loss $\mathcal{L}_{g}$ integrates these two loss items and is formulated as:
\begin{equation}
\mathcal{L}_{g} = \mathcal{L}_{ig} + \mathcal{L}_{og}.
\label{eq:global_loss}
\end{equation}

\noindent\textbf{Local structure regularization.} To enforce local compactness and uniformity, we ensure that samples cluster tightly and uniformly around their respective overall class prototypes (i.e., calculated on labeled and pseudo-labeled data). This is achieved through two losses.

The intra-local loss $\mathcal{L}_{il}$ maximizes the similarity between samples and their corresponding overall class prototypes while minimizing similarity to the overall class prototypes of all other classes, enhancing intra-class compactness. Formally, $\mathcal{L}_{il}$ is defined as:
\begin{equation}
\mathcal{L}_{il} = -\frac{1}{B} \sum_{i=1}^B \log\frac{\exp(z_i \cdot \mu_{y_i} / v)}{\sum_{j=1, j \neq y_i}^{C} \exp(z_i \cdot \mu_j / v)},
\label{eq:il_loss}
\end{equation}
where $z_i$ is the feature representation of $x_i$ extracted by the encoder $\Phi(\cdot)$, $\mu_{y_i}$ is the overall class prototype for class $y_i$, $v$ is the temperature parameter, and $\exp(\cdot)$ is the exponential function.

The radial uniformity loss $\mathcal{L}_{ol}$ promotes a uniform feature distribution around each labeled class prototype by minimizing pairwise cosine similarities between normalized feature representation directions within a class, preventing collapse into a narrow region. Formally, $\mathcal{L}_{ol}$ is defined as:
\begin{equation}
\mathcal{L}_{ol} = \frac{1}{|C|} \sum_{c=1}^{|C|} \bigg[\frac{1}{N_c(N_c-1)} \sum_{i=1}^{N_c} \sum_{\substack{j=1,\\ j \neq i}}^{N_c} (r_{c,i} \cdot r_{c,j}) \bigg]^2,
\end{equation}
where $r_{c,i} = (z_{c,i} - \mu_c^l) / \|z_{c,i} - \mu_c^l\|_2$ is the normalized feature representation direction of sample $x_i$ in class $c$, and $z_{c,i}$ is the feature representation of sample $x_i$ belonging to class $c$.

Accordingly, the local structure loss $\mathcal{L}_{l}$ integrates these two loss items and is formulated as:
\begin{equation}
\mathcal{L}_l = \mathcal{L}_{il} + \mathcal{L}_{ol}.
\label{eq:local_loss}
\end{equation}

\begin{wrapfigure}{r}{0.4\linewidth}
\centering
\vspace{-0.0cm}
\includegraphics[width=1\linewidth]{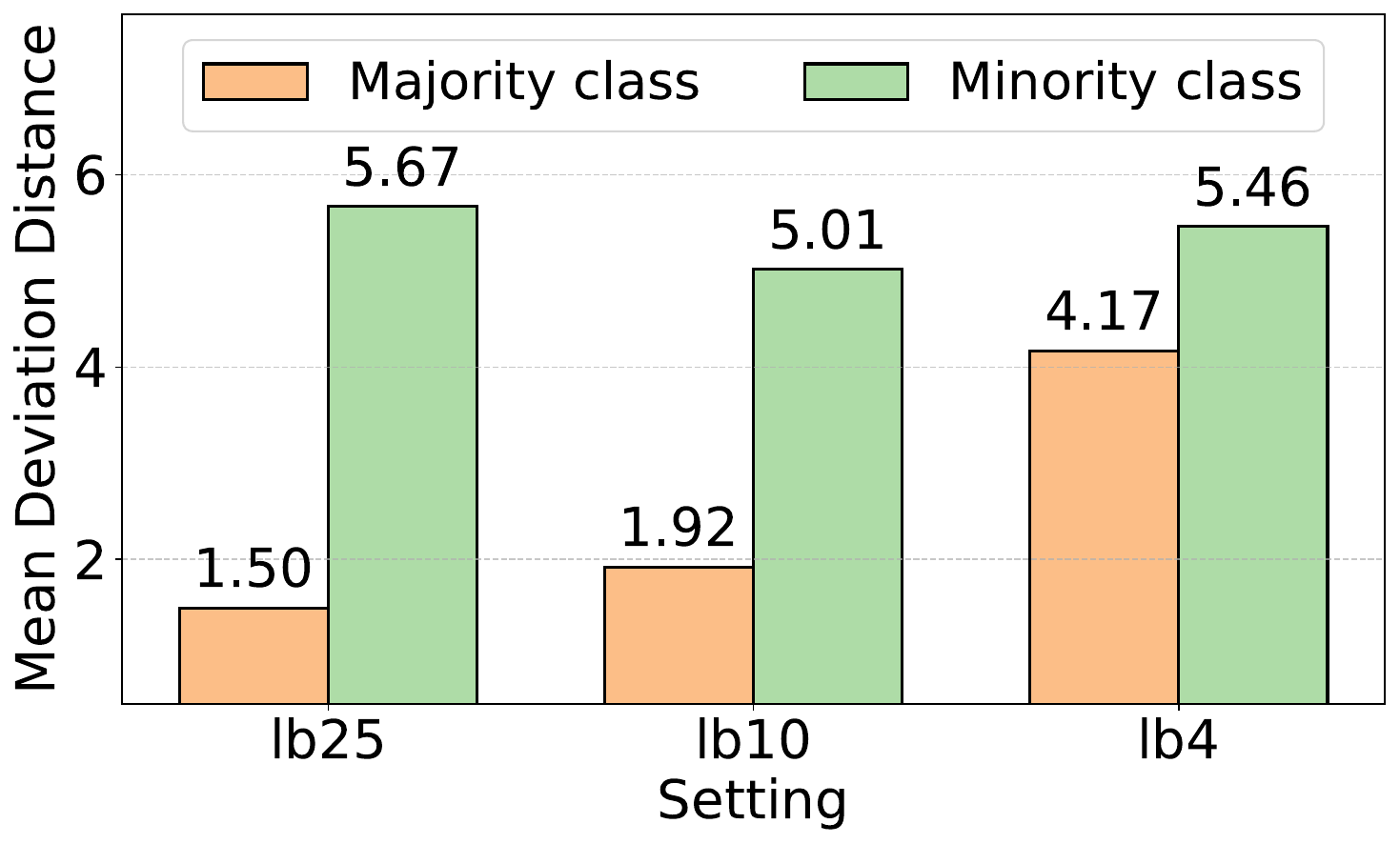}
\vspace{-0.6cm}
\caption{A comparison of the mean deviation distance between the overall class prototype and the labeled class prototype.}
\label{fig:deviation_distance}
\vspace{-0.2cm}
\end{wrapfigure}

\noindent\textbf{Class-adaptive re-weighting.} While the dual regularizer structures the feature representation space into the hub-spoke geometry and improves its discriminability, enhancing the feature discriminability for minority classes remains a critical challenge in long-tailed semi-supervised learning. Fortunately, our empirical analysis reveals that samples from minority classes exhibit a significantly larger deviation between the overall class prototype and the labeled class prototype, as shown in Fig.~\ref{fig:deviation_distance}. This observation motivates us to leverage this deviation for a class-adaptive re-weighting strategy that dynamically prioritizes samples from minority classes. Specifically, for each sample, we compute the cosine similarity between the displacement from the overall class prototype to the sample feature representation (i.e., $q_i = z_i - \mu_{y_i}$) and the overall deviation vector of the class ($\Delta_{y_i} = \mu_{y_i} - \mu_{y_i}^l$). This similarity score is then used to weight the unsupervised loss in Eq.~\ref{eq:unsup}. Formally, the weight $w_i$ for sample $x_i$ is defined as:
\begin{equation}
w_i = 1 + \max\left(0, \frac{q_i \cdot \Delta_{y_i}}{\|q_i\|_2 \cdot \|\Delta_{y_i}\|_2}\right),
\label{eq:w}
\end{equation}
where $\max(\cdot)$ denotes the maximum function. Since the deviation is inherently larger for minority classes, samples belonging to minority classes receive higher weights, thereby adaptively shifting the model's focus toward these classes.

\subsection{Unknown Class Identification and Modeling}
\label{sec:method_unknown}

To effectively identify high‑uncertainty samples from unknown classes, we employ an evidence-based classifier $g(\cdot)$ for uncertainty quantification. Given a sample $x$, the classifier produces a non-negative evidence vector $e$, where each component $e_c$ indicates the support for classifying the sample into class $c$. This vector parameterizes a Dirichlet distribution $\mathrm{Dir}(\pi|\alpha)$ over the class probabilities, with the concentration parameters defined as $\alpha = e + 1$. The predictive uncertainty is derived from the Dirichlet strength $S = \sum_{c=1}^C \alpha_c$, with the uncertainty defined as $u = C / S$. A high value of $u$ corresponds to low overall evidence, suggesting that the sample likely belongs to an unknown class.

\noindent\textbf{Unknown class identification.} During training, we leverage this uncertainty measure to identify potential high‑uncertainty unknown class samples. For each batch, we rank samples by their uncertainty $u$ in descending order and select the top-$k$ most uncertain samples as candidates from unknown classes, where $k$ is set to 10\% of the current batch size (see Appendix~\ref{sec:further_experimental_results_and_analyses} for the hyperparameter sensitivity analysis on $k$). Concretely, let $u_k$ be the $k$-th largest uncertainty in the batch. We then form an evidence-based mask $\mathbb{I}(u \ge u_k)$ to select unknown class sample candidates, thus constructing the identified high‑uncertainty unknown class sample set $\hat{D}_{O,k}$, which serves as the foundation for subsequent unknown class modeling.

To ensure the evidence-based classifier produces well-calibrated uncertainty estimates, we optimize it using an evidence loss $\mathcal{L}_{edl}$ that regularizes the evidence distribution to align with ground-truth labels while maintaining uncertainty awareness:
\begin{equation}
\begin{aligned}
\mathcal{L}_{edl} = \frac{1}{B} \sum_{i=1}^B \bigg[ & \sum_{j=1}^C y_{ij} \left( \psi(S_i) - \psi(\alpha_{ij}) \right) + 
 \lambda_{kl} \cdot \text{KL}(\mathrm{Dir}(\pi_i|\tilde{\alpha}_i) \| \mathrm{Dir}(\pi_i|\mathbf{1})) \bigg],
\end{aligned}
\label{eq:edl}
\end{equation}
where $\psi(\cdot)$ is the digamma function and is defined as the logarithmic derivative of the gamma function, $\alpha_{ij}$ denotes the concentration parameter for class $j$ of sample $i$, and $\lambda_{kl} = \min(0.6, \max(0, \frac{t_{ep}}{10}))$, where $t_{ep}$ denotes the number of epochs since $\mathcal{L}_{edl}$ begins to be optimized during training. The adjusted parameters $\tilde{\alpha}_i = y_i + (\mathbf{1} - y_i) \cdot \alpha_i$ are obtained by removing the non-misleading evidence from the original parameters $\alpha_i$.

\noindent\textbf{Unknown class modeling.} 
Building upon the established hub-spoke geometry and the identified
high-uncertainty unknown class sample set $\hat{D}_{O,k}$, we introduce an
unknown class sample clustering loss $\mathcal{L}_{oc}$ to explicitly model their feature distribution. The global labeled class prototype
$\bar{\mu}^l$, which serves as the hub, is approximately equidistant from all known class prototypes and well separated from the corresponding spokes, making it a natural anchor for unknown class modeling. Without explicit unknown class modeling, unknown class samples may be dispersed among different known class spokes, causing feature and evidence overlap with known class samples. We therefore regularize the representations of the identified high-uncertainty unknown class samples by pulling them toward the hub. Since these samples exhibit low evidence by construction, guiding them toward the hub induces a low-evidence region around the hub, indicating the absence of known-class evidence (see Appendix~\ref{sec:ana} for theoretical analysis). As the hub remains geometrically separated from all known class spokes, this
design not only preserves the discriminability of known classes but also improves the separation between known and unknown classes in the feature space and their uncertainty distributions, as illustrated in Fig.~\ref{fig:motivation_evidence}. Formally, $\mathcal{L}_{oc}$ is defined as:
\begin{equation}
\mathcal{L}_{oc} = \frac{1}{|\hat{D}_{O,k}|} \sum_{i=1}^{|\hat{D}_{O,k}|} \| z_i^{o} - \bar{\mu}^l \|_2^2,
\label{eq:oc}
\end{equation}
where $\hat{D}_{O,k}$ denotes the set of identified high‑uncertainty unknown class samples and $z_i^{o}$ represents the feature representation of the $i$-th identified unknown class sample.

\subsection{End-to-End Training Process}

Our training process is structured into three phases to ensure stable learning. First, a warm-up phase lasting sixty epochs focuses on learning stable initial feature representations using $\mathcal{L}_{warm1}$:
\begin{equation}
\mathcal{L}_{warm1} = \mathcal{L}_{lcon} + \lambda \left( \mathcal{L}_{g} +  \mathcal{L}_{econ}\right),
\label{eq:warm1}
\end{equation}
where $\mathcal{L}_{lcon}$ and $\mathcal{L}_{econ}$ denote the contrastive losses between the weak and strong augmentation views for the classification head $f(\cdot)$ and the evidence head $g(\cdot)$, respectively.

Subsequently, a second warm-up phase lasting twenty epochs integrates the supervised and unsupervised loss to establish baseline performance and stabilize pseudo-labeling, guided by $\mathcal{L}_{warm2}$:
\begin{equation}
\mathcal{L}_{warm2} = \mathcal{L}_{sup} + \mathcal{L}_{unsup} + \lambda \cdot \mathcal{L}_{econ}.
\label{eq:warm2}
\end{equation}

Finally, the main training phase leverages the full objective function to jointly address mismatches in both class distribution and label space:
\begin{equation}
\begin{aligned}
\mathcal{L}_{overall} = \mathcal{L}_{sup} + \mathcal{L}_{unsup} + \lambda \left( \mathcal{L}_{econ} + \mathcal{L}_{g} + \mathcal{L}_{l} + \mathcal{L}_{edl} + \mathcal{L}_{oc} \right).
\end{aligned}
\label{eq:overall}
\end{equation}

Here, $\lambda$ is a balancing hyperparameter set to 0.1 across all three phases. The class-adaptive weight $w_i$ from Eq.~\ref{eq:w} is incorporated into the $\mathcal{L}_{unsup}$ term to adaptively focus on minority classes. Additionally, the confidence-based mask in Eq.~\ref{eq:unsup} is replaced by the combination of the confidence-based mask and the negated evidence-based mask. The pseudo-code is detailed in Algorithm~\ref{alg:algorithm} in Appendix~\ref{sec:pseudo-code}.

\section{Theoretical Analysis}
\label{sec:ana}

We analyze how uncertainty-guided unknown class modeling induces a
low-evidence region around the hub. Let $\bar{\mu}^{l}=
\frac{1}{C}\sum_{c=1}^{C}\mu_c^l$ denote the hub. Following Sec.~\ref{sec:method_unknown}, $\alpha_c(z) = 1+g_c(z), S(z) = C+\sum_{c=1}^{C}g_c(z), u(z)=\frac{C}{S(z)}$, where $g_c(\cdot)$ indicates $e_c$.

\begin{lemma}[Low evidence of selected candidates]
\label{lemma:lemma1}
Let $\widehat{\mathcal D}_{O,k}$ be the top-$k$ most uncertain candidates in a
training batch, and let $u_k$ be the $k$-th largest uncertainty in the batch. Then, for every $z_i^o\in\widehat{\mathcal D}_{O,k}$,
\begin{equation}
g_c(z_i^o)
\le
\varepsilon_{O,k},
\qquad
\varepsilon_{O,k}
:=
C\left(
\frac{1}{u_k}-1
\right),
\quad
c=1,\ldots,C.
\label{eq:lemma1}
\end{equation}
\end{lemma}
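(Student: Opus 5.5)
The argument only needs the definition of the selection rule, the definition of $u$, and the non-negativity of the evidence; no earlier result is required. The plan is to turn the selection threshold on $u$ into an upper bound on the total evidence, and then use non-negativity to pass to each coordinate.

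First, by the construction in Sec.~\ref{sec:method_unknown}, a sample enters $\widehat{\mathcal D}_{O,k}$ exactly when the evidence-based mask $\mathbb{I}(u \ge u_k)$ is active. Hence every $z_i^o\in\widehat{\mathcal D}_{O,k}$ satisfies $u(z_i^o)\ge u_k$. Ties at $u_k$ are harmless, since the inequality is non-strict.

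Second, we rewrite this condition in terms of the Dirichlet strength. Since $u(z)=C/S(z)$ with $S(z)=C+\sum_{c}g_c(z)\ge C>0$, we have $u\in(0,1]$, and in particular $u_k>0$. The map $S\mapsto C/S$ is strictly decreasing on $(0,\infty)$, so $u(z_i^o)\ge u_k$ is equivalent to
\begin{equation*}
S(z_i^o)\le \frac{C}{u_k}.
\end{equation*}
Subtracting $C$ from both sides gives
\begin{equation*}
\sum_{c'=1}^{C} g_{c'}(z_i^o)\le \frac{C}{u_k}-C=C\left(\frac{1}{u_k}-1\right)=\varepsilon_{O,k}.
\end{equation*}

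Third, every evidence component is non-negative: the classifier $g(\cdot)$ outputs a non-negative evidence vector, as stated in Sec.~\ref{sec:method_unknown}. Therefore, for each fixed $c\in\{1,\ldots,C\}$,
\begin{equation*}
g_c(z_i^o)\le \sum_{c'=1}^{C} g_{c'}(z_i^o)\le \varepsilon_{O,k},
\end{equation*}
which is the claimed bound. We also get $\varepsilon_{O,k}\ge 0$ for free, because $u_k\le 1$.

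The only point needing care is the implicit use of $u_k>0$ when inverting, which is guaranteed by $S\ge C$. I do not expect any genuine obstacle; the bound is loose, since it controls each coordinate by the total evidence, but that is all the statement claims.
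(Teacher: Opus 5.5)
Your proof is correct and matches the paper's argument step for step: you turn $u(z_i^o)\ge u_k$ into $S(z_i^o)\le C/u_k$, subtract $C$ to bound the total evidence by $\varepsilon_{O,k}$, and use $g_c\ge 0$ to pass to each coordinate. Your added remarks, that $u_k>0$ because $S\ge C$ and that this makes $\varepsilon_{O,k}\ge 0$, make explicit what the paper leaves implicit.
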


\textit{Proof.}
Since $u(z_i^o)\ge u_k$,
\begin{equation}
S(z_i^o)
=
\frac{C}{u(z_i^o)}
\le
\frac{C}{u_k}.
\end{equation}
Hence
\begin{equation}
\sum_{j=1}^C g_j(z_i^o)
\le
C\left(\frac{1}{u_k}-1\right).
\end{equation}
Using $g_c(z_i^o)\ge0$ gives Eq.~\ref{eq:lemma1}.
\hfill $\square$

\begin{theorem}[Low-evidence region around the hub]
Suppose that the evidence function $g_c(\cdot)$ is $L_c$-Lipschitz, i.e., $|g_c({z}_i)-g_c({z}_j)|\le L_c\|{z}_i-{z}_j\|_2$ for any ${z}_i,{z}_j$. For $\mathcal H(r) =
\left\{ z: \|z-\bar{\mu}^{l}\|_2\le r \right\}$, we have
\begin{equation}
\sup_{z\in\mathcal H(r)}
g_c(z)
\le
\varepsilon_{O,k}
+
L_c
\left(
r+\sqrt{L_{oc}}
\right).
\label{eq:Theorem1}
\end{equation}
\end{theorem}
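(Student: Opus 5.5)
The plan is to combine Lemma~\ref{lemma:lemma1} with the Lipschitz property through a triangle inequality routed via a selected candidate. For any $z\in\mathcal H(r)$ and any candidate $z_i^o\in\widehat{\mathcal D}_{O,k}$, Lipschitz continuity gives
\begin{equation}
g_c(z)\le g_c(z_i^o)+L_c\|z-z_i^o\|_2 \le g_c(z_i^o)+L_c\left(\|z-\bar{\mu}^l\|_2+\|z_i^o-\bar{\mu}^l\|_2\right).
\end{equation}
The first term is at most $\varepsilon_{O,k}$ by Lemma~\ref{lemma:lemma1}. Since $z\in\mathcal H(r)$, the middle distance is at most $r$. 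It remains to bound $\|z_i^o-\bar{\mu}^l\|_2$ by $\sqrt{L_{oc}}$.

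To get that last bound, I will not control every candidate. Instead I will choose one good candidate. Since $L_{oc}$ in Eq.~\ref{eq:oc} is the average of the squared distances $\|z_i^o-\bar{\mu}^l\|_2^2$ over $\widehat{\mathcal D}_{O,k}$, the minimum of these distances is at most their mean. So there is an index $i^\ast$ with $\|z_{i^\ast}^o-\bar{\mu}^l\|_2^2\le L_{oc}$, that is, $\|z_{i^\ast}^o-\bar{\mu}^l\|_2\le\sqrt{L_{oc}}$. One could also average the Lipschitz inequality over all candidates and apply Jensen, $\frac{1}{k}\sum_i\|z_i^o-\bar\mu^l\|_2\le\sqrt{L_{oc}}$. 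This gives the same bound, because the bound on $g_c(z_i^o)$ is uniform over candidates.

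Plugging $i^\ast$ into the chain gives $g_c(z)\le\varepsilon_{O,k}+L_c(r+\sqrt{L_{oc}})$ for every $z\in\mathcal H(r)$. Taking the supremum over $z$ yields Eq.~\ref{eq:Theorem1}.

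The only delicate point is the step from an averaged loss to a pointwise distance. Using the minimizing candidate, or Jensen on the average, resolves it, and both need the candidate set to be nonempty ($k\ge1$). I will also state explicitly that $\bar{\mu}^l$ and $L_{oc}$ refer to the same batch as the candidates.
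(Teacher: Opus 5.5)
Your proposal is correct and follows the same route as the paper's proof. Like the paper, you pick a candidate $z_*^o$ with $\|z_*^o-\bar{\mu}^l\|_2\le\sqrt{L_{oc}}$ (the minimum squared distance is at most the average), bound $\|z-z_*^o\|_2\le r+\sqrt{L_{oc}}$ by the triangle inequality, and then combine Lipschitz continuity with Lemma~\ref{lemma:lemma1}. Your Jensen variant and your note that the candidate set must be nonempty are valid refinements; in the Jensen step, normalize by $|\widehat{\mathcal D}_{O,k}|$ rather than $k$, since ties at $u_k$ can select more than $k$ samples.
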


\textit{Proof.}
Let
$n_{O,k}=|\widehat{\mathcal D}_{O,k}|$. From Eq.~\ref{eq:oc}, we have $
L_{oc}
=
\frac{1}{n_{O,k}}
\sum_{i=1}^{n_{O,k}}
\|z_i^o-\bar{\mu}^l\|_2^2$. Since $L_{oc}$ is the average squared distance to the hub, there exists $z_*^o\in\widehat{\mathcal D}_{O,k}$ such that$\|z_*^o-\bar{\mu}^l\|_2^2\le L_{oc}$, and hence
\begin{equation}
\|z_*^o-\bar{\mu}^l\|_2\le\sqrt{L_{oc}}.
\end{equation}

For any $z\in\mathcal H(r)$,
\begin{equation}
\|z-z_*^o\|_2
\le
\|z-\bar{\mu}^l\|_2
+
\|z_*^o-\bar{\mu}^l\|_2
\le
r+\sqrt{L_{oc}}.
\end{equation}
Since $g_c$ is $L_c$-Lipschitz continuous, Lemma~\ref{lemma:lemma1} gives
\begin{equation}
\begin{aligned}
g_c(z) \le g_c(z_*^o) + L_c\|z - z_*^o\|_2  \le \varepsilon_{O,k} + L_c\left(r+\sqrt{L_{oc}}\right).
\end{aligned}
\end{equation}
Taking the supremum over $ z\in\mathcal H(r)$ yields Eq.~\ref{eq:Theorem1}.
\hfill $\square$

Hence, high uncertainty among the selected candidates and strong hub-centered clustering yield a quantitatively
low-evidence hub region and correspondingly high evidential uncertainty.

\section{Experiments}
\label{sec:exp}

\noindent\textbf{Datasets.} We evaluate our method on five benchmarks (i.e., CIFAR-10~\citep{CIFAR}, CIFAR-100~\citep{CIFAR}, SVHN~\citep{SVHN}, Food-101~\citep{Food}, and ImageNet-100~\citep{ImageNet}). To simulate class distribution mismatch, we construct a balanced labeled dataset with different sizes (i.e., 10 and 25 samples per class), while the unlabeled dataset follows an imbalanced class distribution. For label space mismatch, we introduce unknown class samples into the unlabeled data. Concretely, we add 10 unknown classes from ImageNet-127~\citep{ImageNet} to the CIFAR-10 and SVHN datasets, 50 from ImageNet-127 to the CIFAR-100 and Food-101 datasets, and 50 from CIFAR-100 to the ImageNet-100 dataset. 

\noindent\textbf{Baselines.} We compare our method with three SSL algorithms, including FixMatch~\citep{FixMatch}, FreeMatch~\citep{FreeMatch}, and SoftMatch~\citep{SoftMatch}; four LTSSL algorithms, including SimPro~\citep{SimPro}, CDMAD~\citep{CDMAD}, Meta-Expert~\citep{Meta-Expert}, and SC-SSL~\citep{SC-SSL}; and six OSSL algorithms, including OpenMatch~\citep{OpenMatch}, IOMatch~\citep{IOMatch}, SCOMatch~\citep{SCOMatch}, CaliMatch~\citep{CaliMatch}, ANEDL~\citep{ANEDL}, and GGR~\citep{GGR}. Moreover, we use the supervised learning (SL) setting as an upper-bound reference for performance. Implementation details are provided in Appendix~\ref{sec:implementation_details}.

\subsection{Experimental Results and Analyses}
\label{sec:res}

\begin{table*}[htbp]
\centering
\caption{Classification accuracy (\%) on the test data of CIFAR-10, CIFAR-100, SVHN, Food-101, and ImageNet-100 with varying known/unknown classes and labeled set sizes. We report the mean with standard deviation over three runs with different random seeds.}
\label{tab:cifar-10-100-svhn-results}
\scalebox{0.67}{
\setlength{\tabcolsep}{0.75mm}
\begin{tabular}{cccccccccccc}
\toprule
\multicolumn{2}{c}{Dataset} & \multicolumn{2}{c}{CIFAR-10} & \multicolumn{2}{c}{CIFAR-100} & \multicolumn{2}{c}{SVHN} & \multicolumn{2}{c}{Food-101} & \multicolumn{2}{c}{ImageNet-100} \\
\cmidrule(r){1-2} \cmidrule(r){3-4} \cmidrule(r){5-6} \cmidrule(r){7-8} \cmidrule(r){9-10} \cmidrule(r){11-12}
\multicolumn{2}{c}{Known / Unknown} & \multicolumn{2}{c}{10 / 10} & \multicolumn{2}{c}{100 / 50} & \multicolumn{2}{c}{10 / 10} & \multicolumn{2}{c}{101 / 50} & \multicolumn{2}{c}{100 / 50} \\
\cmidrule(r){1-2} \cmidrule(r){3-4} \cmidrule(r){5-6} \cmidrule(r){7-8} \cmidrule(r){9-10} \cmidrule(r){11-12}
\multicolumn{2}{c}{Labels per class}   &    10   &   25   &   10   &   25   &   10   &   25  &   10   &   25   &   10   &   25 \\
\midrule
\multicolumn{1}{c|}{\multirow{2}{*}{\rotatebox[origin=c]{90}{\textcolor{Periwinkle}{\textbf{SL}}}}} 
                      & CE                               & 78.69\mystd{5.10}  & 80.21\mystd{4.85}  & 51.01\mystd{0.73} & 55.74\mystd{0.57}   & 88.93\mystd{1.19}  & 89.18\mystd{1.05} &  39.79\mystd{0.48}  & 42.56\mystd{0.38}  &  44.63\mystd{0.27}  &  46.61\mystd{1.16}  \\
\multicolumn{1}{c|}{} & LA                     & 85.45\mystd{2.64}  & 86.08\mystd{2.47} &  56.20\mystd{0.48} & 59.67\mystd{0.45}   & 92.31\mystd{1.01}  & 92.63\mystd{1.19} &   45.48\mystd{0.37}  & 47.44\mystd{0.20} &   49.61\mystd{0.25} &  51.37\mystd{0.86}  \\
\midrule
\multicolumn{1}{c|}{\multirow{3}{*}{\rotatebox[origin=c]{90}{\textcolor{Periwinkle}{\textbf{SSL}}}}} 
                      & FixMatch        & 69.57\mystd{5.03}  & 76.86\mystd{4.69}  & 39.40\mystd{1.31} & 51.23\mystd{1.29}  & 89.71\mystd{2.60}  & 90.82\mystd{1.02} &  15.95\mystd{0.55}  & 23.23\mystd{0.25}  &  27.44\mystd{0.81}  &  34.95\mystd{0.76}  \\
\multicolumn{1}{c|}{} & FreeMatch       & 64.45\mystd{6.47}  & 75.72\mystd{5.27} &  32.56\mystd{1.62} & 47.52\mystd{1.21} &  89.10\mystd{2.30}  & 89.83\mystd{2.54} &  19.64\mystd{0.34}  & 27.65\mystd{0.20}  &  27.39\mystd{0.45}  & 36.31\mystd{0.97}   \\
\multicolumn{1}{c|}{} & SoftMatch        & 67.97\mystd{5.22}  & 74.56\mystd{2.94} &  34.22\mystd{1.82} & 48.74\mystd{0.46} &  86.34\mystd{1.30}  & 87.44\mystd{2.05} &  19.68\mystd{0.57}  & 27.71\mystd{1.07}  &  27.13\mystd{1.48}   &  36.01\mystd{1.01}  \\
\midrule
\multicolumn{1}{c|}{\multirow{3}{*}{\rotatebox[origin=c]{90}{\textcolor{Periwinkle}{\textbf{LTSSL}}}}} 
                      &  SimPro             & 26.19\mystd{3.62}  & 68.72\mystd{4.87}  & 35.10\mystd{1.24} & 49.68\mystd{1.29} &  59.60\mystd{5.27}  & 83.89\mystd{3.17} &  13.06\mystd{1.23}  & 25.18\mystd{0.45}  &  26.19\mystd{0.46}  &   36.14\mystd{0.39} \\
\multicolumn{1}{c|}{} &  CDMAD               & 58.77\mystd{8.60}  & 70.38\mystd{4.41}  & 24.47\mystd{2.37} & 31.35\mystd{3.08}   & 81.35\mystd{3.20}  & 82.54\mystd{2.09} &  6.35\mystd{1.25}  & 8.31\mystd{2.47}  &  14.52\mystd{2.49}  & 17.72\mystd{1.43}   \\
\multicolumn{1}{c|}{} &  Meta-Expert   & 57.37\mystd{7.89}  & 69.83\mystd{5.37} &  32.05\mystd{1.54} & 47.54\mystd{1.13}  & 79.00\mystd{2.66}  & 86.38\mystd{1.14} &  17.74\mystd{0.39}  & 25.32\mystd{0.59}  &  27.08\mystd{0.37}   &  34.40\mystd{0.34}  \\
\multicolumn{1}{c|}{} &  SC-SSL   & 67.26\mystd{7.13}  & 77.23\mystd{4.41} &  35.23\mystd{1.63} & 49.03\mystd{0.66}  & 89.05\mystd{3.71}  & 89.63\mystd{2.11} &  17.04\mystd{0.23}  & 23.95\mystd{0.12}  &  26.97\mystd{1.18}   &  35.37\mystd{1.0}  \\
\midrule
\multicolumn{1}{c|}{\multirow{4}{*}{\rotatebox[origin=c]{90}{\textcolor{Periwinkle}{\textbf{OSSL}}}}}
                      & OpenMatch        & 47.21\mystd{5.06}  & 68.13\mystd{3.54}  & 32.81\mystd{0.19} & 47.31\mystd{0.44} &  84.88\mystd{5.73}  & 90.47\mystd{0.92} &   10.61\mystd{0.92} & 16.00\mystd{0.39}  &  19.29\mystd{0.83}  &  25.74\mystd{1.34}  \\
\multicolumn{1}{c|}{} & IOMatch            & 67.33\mystd{5.06}  & 77.23\mystd{3.11}  & 38.90\mystd{1.50} & 50.62\mystd{1.12}   & 87.98\mystd{2.87}  & 90.80\mystd{1.50} &  7.62\mystd{0.37}  & 14.46\mystd{0.73}  &   9.04\mystd{6.76}&  21.92\mystd{0.04}   \\
\multicolumn{1}{c|}{} & SCOMatch         & 63.41\mystd{6.06}  & 74.07\mystd{3.41}  & 30.57\mystd{2.90} & 47.42\mystd{1.81}   & 88.52\mystd{2.86}  & 90.74\mystd{0.94} &  13.14\mystd{2.65}  & 16.03\mystd{2.88}  &    17.36\mystd{2.36}&  22.04\mystd{1.31}   \\
\multicolumn{1}{c|}{} & CaliMatch       & 51.33\mystd{10.72} & 67.80\mystd{0.16}  & 35.99\mystd{6.00} & 50.86\mystd{5.36}   & 55.55\mystd{14.84} & 90.71\mystd{1.66} &  7.22\mystd{1.95}  & 10.22\mystd{5.44}  &   15.17\mystd{1.89} &  25.78\mystd{3.46}   \\
\multicolumn{1}{c|}{} & ANEDL       & 27.58\mystd{0.79} & 48.21\mystd{4.01}  & 22.87\mystd{3.54} & 37.26\mystd{3.59}   & 56.30\mystd{1.85} & 76.12\mystd{0.64} &  8.77\mystd{0.69}  & 15.37\mystd{1.87}  &   12.51\mystd{1.56} &  20.23\mystd{0.65}   \\
\multicolumn{1}{c|}{} & GGR       & 61.24\mystd{6.81} & 69.70\mystd{6.69}  & 25.69\mystd{1.43} & 41.67\mystd{2.63}   & 88.21\mystd{0.96} & 88.67\mystd{1.54} &  7.25\mystd{0.66}  & 13.47\mystd{0.98}  &   13.20\mystd{0.48} &  20.35\mystd{0.91}   \\
\midrule
\rowcolor{blue!10} & \textbf{Ours}   & \textbf{71.55\mystd{3.50}} & \textbf{79.20\mystd{2.42}}  & \textbf{42.65\mystd{1.54}} & \textbf{52.99\mystd{0.61}}  & \textbf{90.38\mystd{1.16}} & \textbf{91.28\mystd{0.80}}  &  \textbf{19.82\mystd{0.20}}  & \textbf{27.87\mystd{1.12}}  &  \textbf{27.52\mystd{0.30}}  &  \textbf{36.83\mystd{0.81}}  \\
\bottomrule
\end{tabular}
}
\end{table*}

\noindent\textbf{Main Results.} Table~\ref{tab:cifar-10-100-svhn-results} presents the experimental results on CIFAR-10, CIFAR-100, SVHN, Food-101, and ImageNet-100 datasets under varying known/unknown classes and labeled set sizes. As illustrated in Table~\ref{tab:cifar-10-100-svhn-results}, our method consistently outperforms all baseline methods and is robust to dual mismatch. On CIFAR-10, we outperform the recent method CaliMatch by \textbf{20.22\%} and \textbf{11.40\%} with 10 and 25 labeled samples per class, respectively. Moreover, the performance gain persists on CIFAR-100, where we achieve a 3.25\% performance gain over the best-performing baseline FixMatch, with 10 labeled samples per class. On SVHN, while the performance gap narrows due to the dataset’s lower complexity and clearer known and unknown class distinction, our method still achieves competitive performance.

On the more challenging Food-101 dataset, our method substantially outperforms Meta-Expert by 2.08\% and 2.55\%, respectively, despite the overall low accuracy caused by its fine‑grained and numerous categories. Finally, on the highly complex ImageNet‑100, with 10 labeled samples per class, our method achieves 27.52\%, surpassing CaliMatch by a substantial 12.35\%; with 25 labeled samples per class, we outperform the previous best FreeMatch by 0.52\%. Notably, our method exhibits a lower standard deviation across three runs compared to most baseline methods. Further experimental results and analyses are provided in Appendix~\ref{sec:further_experimental_results_and_analyses}.

\subsection{Ablation Analyses and Discussions}
\label{sec:abla}

\noindent\textbf{The effect of each module.} We conduct comprehensive ablation studies on CIFAR-10 and CIFAR-100 to evaluate the contribution of each proposed module across three settings. As shown in Table~\ref{tab:ablation}, progressively integrating our modules leads to consistent performance gains.

The baseline model, devoid of any proposed modules, exhibits limited capability, especially in the lb4 setting, where it attains only 38.24\% and 26.50\% accuracy on CIFAR-10 and CIFAR-100, respectively. Incorporating the class-adaptive re-weighting (CAW) module brings consistent improvements, with gains of up to 2.10\% on CIFAR-10 and 2.50\% on CIFAR-100. Introducing the local structure regularization (LSR) module yields more substantial improvements, boosting accuracy on CIFAR-10 by \textbf{10.46\%} and CIFAR-100 by 2.35\% in the lb4 setting. 

\begin{wraptable}{r}{0.5\linewidth}
\vspace{-0.4cm}
\caption{Classification accuracy (\%) with and without the key components in the proposed method. }
\label{tab:ablation}
\centering
\setlength{\tabcolsep}{1.0mm}
\begin{adjustbox}{width=0.5\columnwidth, center}
\begin{tabular}{ccccccccccc}
\toprule
\multicolumn{4}{c}{Module} & \multicolumn{3}{c}{CIFAR-10} & \multicolumn{3}{c}{CIFAR-100} \\
\cmidrule(lr){1-4} \cmidrule(lr){5-7} \cmidrule(lr){8-10}
 CAW & LSR & GSR & UCIM & lb4 & lb10 & lb25 & lb4 & lb10 & lb25 \\
\midrule
           &            &            &            & 38.24 & 64.46 & 74.40 & 26.50 & 40.75 & 51.00 \\
\checkmark &            &            &            & 40.34 & 64.86 & 76.00 & 27.89 & 43.25 & 53.00 \\
\checkmark & \checkmark &            &            & 50.80 & 68.05 & 76.95 & 30.24 & 43.25 & 53.49 \\
\checkmark & \checkmark & \checkmark &            & 54.30 & 68.54 & 76.88 & 30.25 & 43.43 & 53.30 \\
\rowcolor{blue!10} \checkmark & \checkmark & \checkmark & \checkmark & \textbf{56.89} & \textbf{70.96} & \textbf{78.05} & \textbf{30.43} & \textbf{44.37} & \textbf{53.53} \\
\bottomrule
\end{tabular}
\end{adjustbox}
\vskip -0.1in
\end{wraptable}
Further adding the global structure regularization (GSR) enhances the local-only variant by an average of 0.65\%. Finally, the unknown class identification and modeling (UCIM) module contributes additional gains, averaging 2.06\% on CIFAR-10 and 0.45\% on CIFAR-100. The full model achieves the best performance, confirming that each module addresses a unique aspect of the challenging dual-mismatch problem involved in the semi-supervised learning scenario and that their effects are complementary and cumulative.

\begin{wrapfigure}{r}{0.5\linewidth}
\vspace{-0.4cm}
\centering
\begin{subfigure}{0.238\textwidth}
\includegraphics[width=\textwidth]{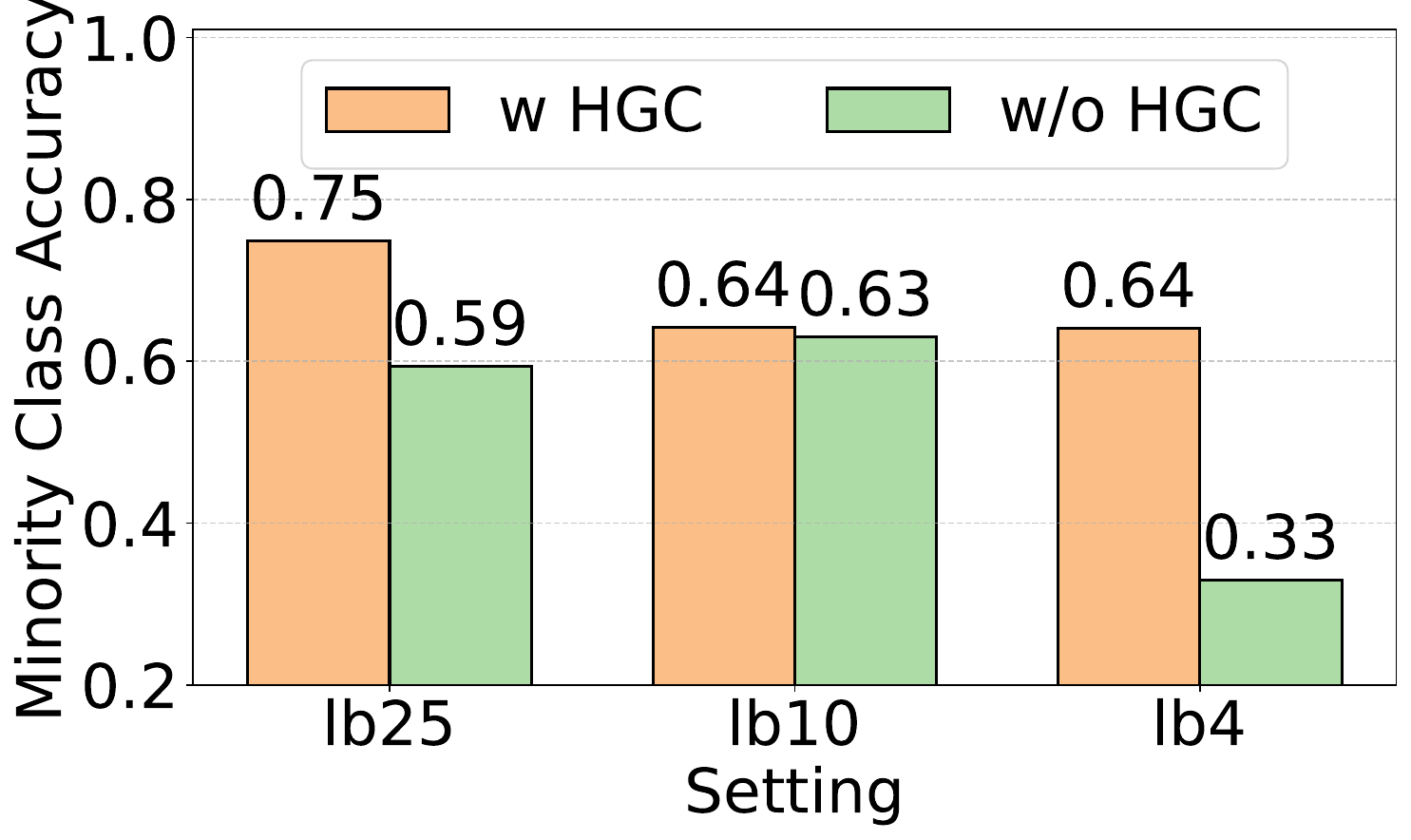}
\caption{}
\label{fig:acc_minority_class}
\end{subfigure}
\begin{subfigure}{0.232\textwidth}
\includegraphics[width=\textwidth]{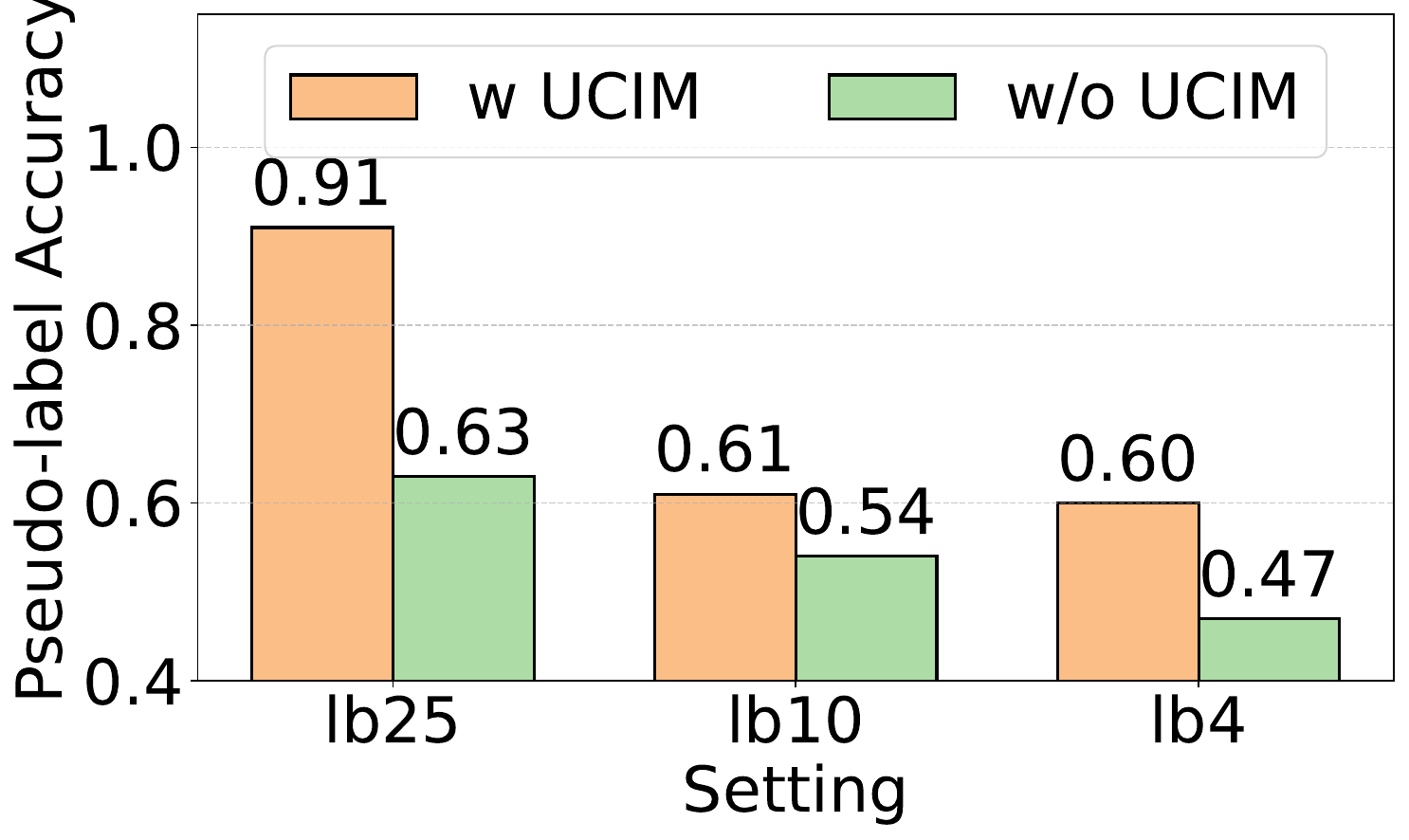}
\caption{}
\label{fig:pesudo_label_ucim}
\end{subfigure}
\vspace{-0.29cm}
\caption{Ablation studies on HGC module for minority class accuracy and UCIM module for pseudo-label accuracy.}
\label{fig:combined_ablation}
\vspace{-0.6cm}
\end{wrapfigure}

Moreover, we evaluate two key modules of our method. The hub-spoke geometry construction (HGC) module which combines LSR and GSR mitigates minority class performance degradation caused by class imbalance. As shown in Fig.~\ref{fig:acc_minority_class}, HGC substantially improves minority class accuracy, e.g., from 33\% to 64\% in the lb4 setting on CIFAR-10, confirming its effectiveness in enhancing feature learning for minority classes under imbalanced distributions. In parallel, the unknown class identification and modeling (UCIM) module boosts pseudo-label accuracy from 47\% to 60\% in the same lb4 setting on CIFAR-10 as illustrated in Fig.~\ref{fig:pesudo_label_ucim}, demonstrating that reliable unknown class handling enhances both pseudo-label quality and robustness against dual mismatch. Further ablation analyses and discussions are provided in Appendix~\ref{sec:further_ablation_analyses_and_discussions}.

\section{Conclusion}

In this work, we addressed the challenging and realistic scenario of dual-mismatched semi-supervised learning, where unlabeled data exhibits an imbalanced class distribution and contains unknown class samples. Our proposed method integrates a structured hub‑spoke geometry with evidence‑based unknown class identification. The hub‑spoke geometry ensures a discriminative and robust feature representation space, while the evidence‑based classifier provides well‑calibrated uncertainty estimates. We further theoretically show that the combination of high-uncertainty samples and hub-centered clustering induces a low-evidence region around the hub, providing theoretical support for our unknown class modeling strategy. Extensive experiments demonstrate that our method consistently outperforms previous state‑of‑the‑art methods under various dual mismatch conditions.

\subsection*{AI use statement}
In this work, we used generative AI tools to assist in the writing of proofs and assist with translation. We have not used generative AI tools for other tasks requiring disclosure. Additionally, we used generative AI tools to  edit software code and the research paper to improve readability. We have reviewed all AI-assisted work. AI-generated proof steps were manually checked for logical correctness. Translations were verified against the source text. AI-edited code was tested for correctness by two authors. All readability edits were approved by all authors. We take responsibility for the final content of this work, including text, claims or artifacts produced with the aid of generative AI.

\subsection*{Ethics statement}
The research utilizes publicly available datasets containing no personally identifiable information. As a strictly technical contribution focused on algorithmic improvement, we foresee no ethical issues or potential
for misuse.
\subsection*{Reproducibility statement}

To ensure the reproducibility of our work, we provide a comprehensive description of the experimental setup in Sec.~\ref{sec:exp} and Appendix~\ref{sec:implementation_details}. All datasets used in our experiments  are publicly available, and the source code is included as supplementary material. These resources collectively enable faithful reproduction of our main experimental results.

\bibliography{iclr2027_conference}
\bibliographystyle{iclr2027_conference}

\appendix
\clearpage
\section*{Table of Contents for the Appendix}
\begin{itemize}[leftmargin=25pt]
    \setlength{\itemsep}{10pt}
    \item ~\ref{sec:more_related_work}: More Related Work \dotfill \pageref{sec:more_related_work}
    
    \item ~\ref{sec:pseudo-code}: Pseudo-code of the Proposed Method \dotfill \pageref{sec:pseudo-code}

    \item ~\ref{sec:implementation_details}: Implementation Details \dotfill \pageref{sec:implementation_details}

    \item ~\ref{sec:further_experimental_results_and_analyses}: Further Experimental Results and Analyses \dotfill \pageref{sec:further_experimental_results_and_analyses}

    \item ~\ref{sec:further_ablation_analyses_and_discussions}: Further Ablation Analyses and Discussions \dotfill \pageref{sec:further_ablation_analyses_and_discussions}

    \item ~\ref{sec:compute_resources}: Compute Resources \dotfill \pageref{sec:compute_resources}

\end{itemize}

\newpage

\section{More Related work}
\label{sec:more_related_work}

Evidential deep learning (EDL), grounded in the Dempster-Shafer theory~\citep{DST} and subjective logic~\citep{SL}, has emerged as a prominent framework for uncertainty quantification and unknown class sample detection. For instance, HEDL~\citep{HEDL} introduces hyper-domain evidence to capture both certain and ambiguous classes, enhancing uncertainty estimation via an opinion projection mechanism, while CEDL~\citep{CEDL} integrates EDL into continual learning, leveraging its ability to distinguish unknown class samples from past known class ones. EDALSSC~\citep{EDALSSC} proposes a semi-supervised evidential model that dynamically balances evidence for active learning sample selection. Notably, ANEDL~\citep{ANEDL} is positioned as the first to adapt EDL for OSSL, utilizing an adaptive negative optimization strategy to improve unknown class sample detection and emphasize uncertain classes. However, it does not address the realistic scenario involving both class distribution and label space mismatches between labeled and unlabeled data.

\section{Pseudo-code of the Proposed Method}
\label{sec:pseudo-code}

\begin{algorithm}[ht]
\caption{Training process of our proposed method}
\label{alg:algorithm}
\begin{algorithmic}[1]
\REQUIRE Labeled and unlabeled datasets $\mathcal{D}_l$ and $\mathcal{D}_u$.
\ENSURE Encoder $\Phi(\cdot)$, classification head $f(\cdot)$, and evidence head $g(\cdot)$.
\STATE Initialize the parameters of $\Phi(\cdot)$, $f(\cdot)$, and $g(\cdot)$ randomly.
\FOR{epoch $= 1, 2, \dots$}
    \FOR{batch $= 1, 2, \dots$}
        \IF {epoch $< 60$}
            \STATE Calculate $\mathcal{L}_{warm1}$ by Eq.~\eqref{eq:warm1};
        \ENDIF
        \IF {$60 \leq$ epoch $< 80$}
            \STATE Calculate $\mathcal{L}_{warm2}$ by Eq.~\eqref{eq:warm2};
        \ENDIF
        \IF {epoch $\geq 80$}
            \STATE Construct hub-spoke geometry by Eq.~\eqref{eq:global_loss} and Eq.~\eqref{eq:local_loss};
            \STATE Compute class-adaptive weight $w$ by Eq.~\eqref{eq:w};
            \STATE Calculate evidence loss $\mathcal{L}_{edl}$ by Eq.~\eqref{eq:edl} for the evidence head;
            \STATE Pull identified unknown class samples toward the hub by Eq.~\eqref{eq:oc};
            \STATE Obtain the overall loss $\mathcal{L}_{overall}$ by Eq.~\eqref{eq:overall};
        \ENDIF
        \STATE Update network parameters via gradient descent;
    \ENDFOR
\ENDFOR
\end{algorithmic}
\end{algorithm}

\section{Implementation Details}
\label{sec:implementation_details}
We follow the default settings and hyperparameters in USB~\citep{USB}. Specifically, the batch size of labeled data $B_l$ is set to 64, while that of unlabeled data $B_u$ is set to 7 times $B_l$; the confidence threshold $\tau$ is set to 0.95, and the temperature parameter $v$ is set to 0.1. Moreover, we use the WRN-28-2~\citep{WRN} architecture, the SGD optimizer with momentum 0.9, and weight decay 5e-4 for training. We use a cosine learning rate decay scheme~\citep{SGDR}, setting the learning rate to $\eta \cos \left(\frac{7 \pi t}{16 T}\right)$. Here, $\eta=0.03$ is the initial learning rate, $t$ is the current training step, and $T=2^{18}$ is the total number of training steps. We repeat each experiment with three different random seeds (i.e., 0, 1, and 4) and report the mean and standard deviation of the performance. We conduct the experiments on a single NVIDIA RTX 4090 GPU using PyTorch v2.4.1.

\section{Further Experimental Results and Analyses}
\label{sec:further_experimental_results_and_analyses}

\subsection{Hyperparameter Sensitivity Analysis}

We perform hyperparameter sensitivity analysis on three key hyperparameters: $k$, the percentage of high-uncertainty samples selected per batch (i.e., the number of identified unknown class samples per batch), $v$, the temperature scaling factor in Eq.~\ref{eq:il_loss}, and $\lambda$, the balancing weight for  losses in Eq.~\ref{eq:overall}. As shown in Fig.~\ref{fig:k_parameter}, our method is robust to the choice of $k$. On CIFAR-10, both $k=10\%$ and $k=20\%$ yield nearly identical performance and significantly outperform the $k=0\%$ baseline across all three settings. Specifically, $k=10\%$ improves over the baseline by +5.0\%, +2.8\%, and +1.0\% for 4, 10, and 25 labeled samples per class, respectively. The comparable results for $k=20\%$ (i.e., gains of +4.7\%, +0.8\%, and +0.1\%) indicate our method's relative insensitivity to this hyperparameter. Similarly, as shown in Fig.~\ref{fig:v_parameter}, performance remains stable across different values of $v$, with an average standard deviation of merely 0.99\%, confirming its robustness. For $\lambda$, we evaluate three values (0.05, 0.10, 0.20) on CIFAR-10, with results reported in Table~\ref{tab:lambda-ablation}. The default $\lambda = 0.10$ achieves the best or near-best accuracy across all labeled settings, e.g., 56.89$\%$ (lb4) and 70.96$\%$ (lb10), while $\lambda = 0.20$ yields a slightly higher result (78.24$\%$) only for lb25. Overall, the performance variation across different $\lambda$ values remains modest, demonstrating that our approach is also robust to this hyperparameter.

\newcommand{\picrowwidth}{0.8\linewidth} 

\begin{figure}[t]
    \centering
    \begin{minipage}{\picrowwidth}
        \centering
        \begin{subfigure}{0.47\linewidth} 
            \includegraphics[width=\linewidth]{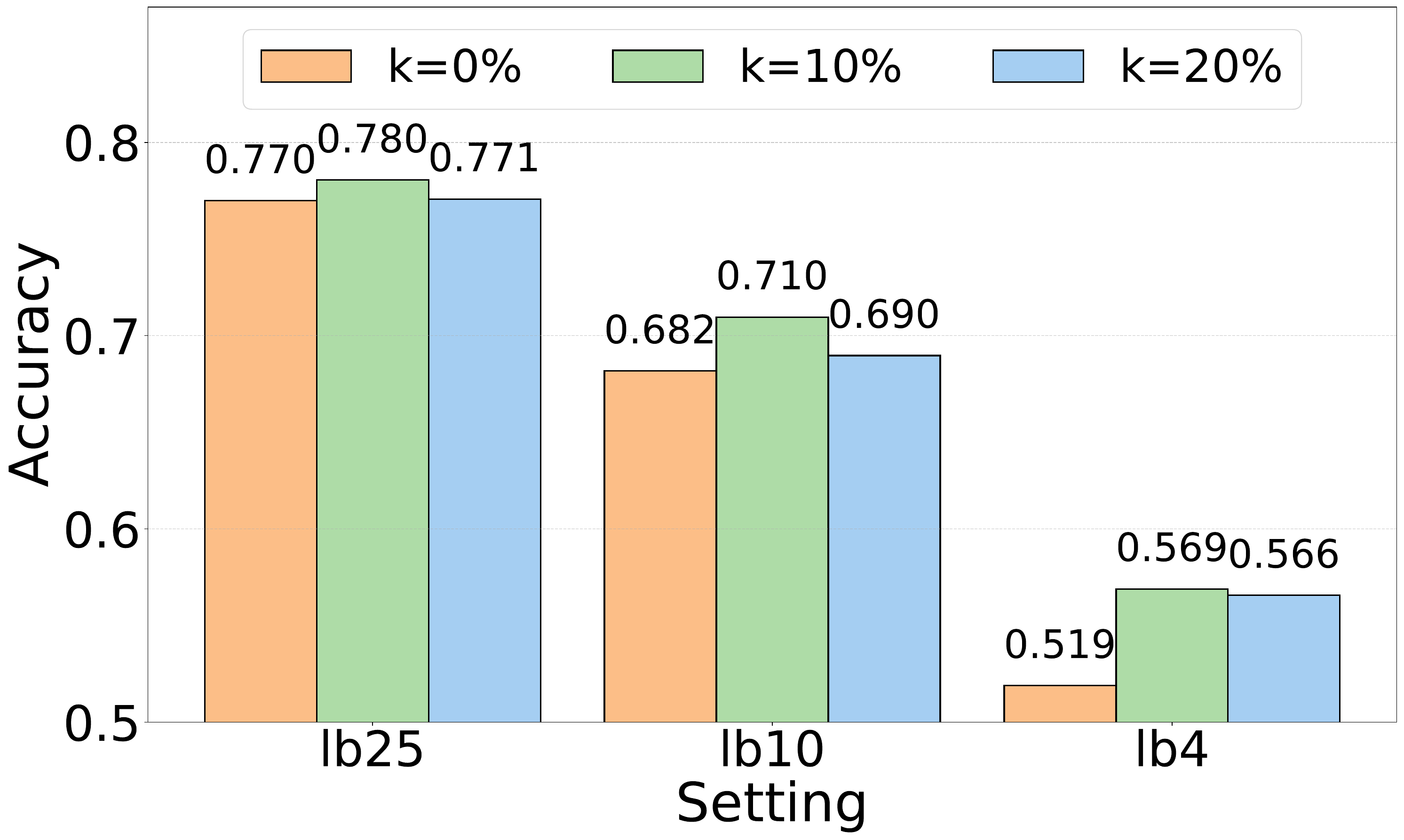}
            \caption{}
            \label{fig:k_parameter}
        \end{subfigure}
        \hfill
        \begin{subfigure}{0.47\linewidth}
            \includegraphics[width=\linewidth]{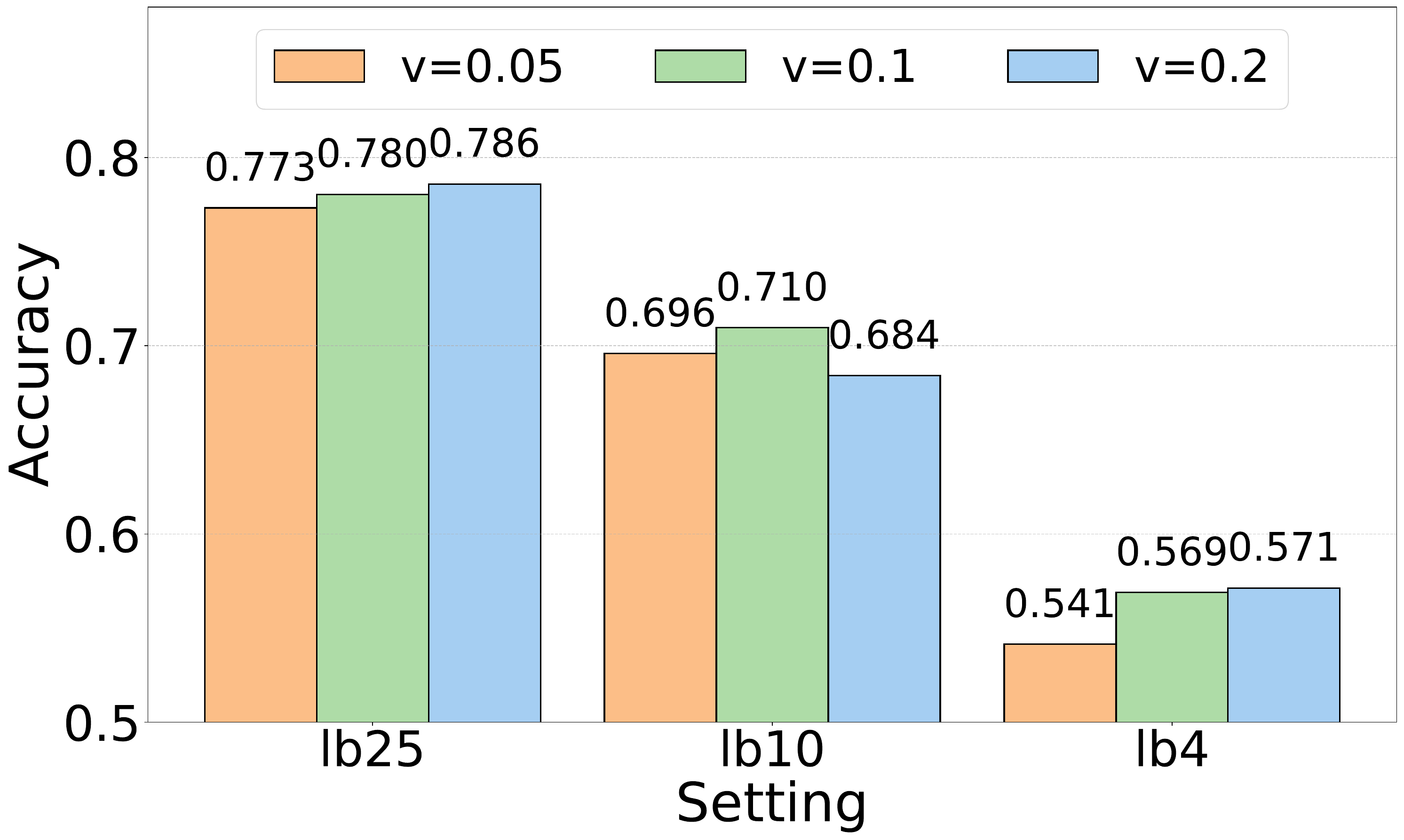}
            \caption{}
            \label{fig:v_parameter}
        \end{subfigure}
    \end{minipage}
    \caption{(a) and (b) respectively compare classification accuracy on the CIFAR-10 test data for \(k \in \{0\%, 10\%, 20\%\}\) and \(v \in \{0.05, 0.1, 0.2\}\) across the three settings. We clearly see that \(k=10\%\) and \(k=20\%\) yield comparable results, and performance remains stable across different values of \(v\). }
    \label{fig:parameters}
\end{figure}

\begin{table}[htbp]
\centering
\setlength{\abovecaptionskip}{1mm}
\caption{Classification accuracy (\%) with different $\lambda$ values on CIFAR-10. The results show that $\lambda = 0.1$ achieves the best or near-best performance across all settings.}
\label{tab:lambda-ablation}
\begin{tabular}{cccc}
\toprule
$\lambda$ & lb4 & lb10 & lb25 \\
\midrule
0.05 & 52.74 & 69.18 & 77.43 \\
\textbf{0.10} & \textbf{56.89} & \textbf{70.96} & 78.05 \\
0.20 & 55.34 & 68.43 & \textbf{78.24} \\
\bottomrule
\end{tabular}
\end{table}

\subsection{Robustness to Unknown Class Diversity and Effectiveness of Unknown Class Modeling}

To further evaluate the robustness and effectiveness of our proposed unknown class handling strategy, we examine the sensitivity of our method to the number of unknown classes by varying it from 50 to 10 on CIFAR-100 while keeping the total unknown class sample count fixed. As shown in Fig.~\ref{fig:acc_vs_unknown_classes}, our method maintains relatively stable performance across all settings despite substantial changes in the diversity of unknown classes. This robustness can be attributed to our hub-based unknown class modeling: instead of explicitly modeling each unknown category, we identify high-uncertainty samples based on their evidence and guide them toward a shared low-evidence region around the hub. Since the hub provides a common target for uncertain samples regardless of their specific unknown class, the same modeling mechanism can be applied as the number of unknown classes changes.

We further compare our unknown class modeling with a K-means clustering baseline on the identified high-uncertainty unknown class samples. For a fair comparison, we set the number of clusters $K$ in K-means to the ground-truth number of unknown classes. As illustrated in Fig.~\ref{fig:kmeans_compare}, our method consistently outperforms K-means across all three settings on CIFAR-10, achieving absolute gains of 0.5\%, 4.8\%, and 1.2\% in the lb4, lb10, and lb25 settings, respectively. The superior performance, particularly in the lb10 setting, highlights the advantage of our hub-spoke geometry, which leverages evidence-based uncertainty to guide high-uncertainty unknown class samples toward a shared low-evidence hub, thereby enhancing known–unknown class discriminability and improving pseudo-label quality.

\begin{figure}[t]
    \centering
    \begin{minipage}{\picrowwidth}
        \centering
        \begin{subfigure}{0.47\linewidth}
            \includegraphics[width=\linewidth]{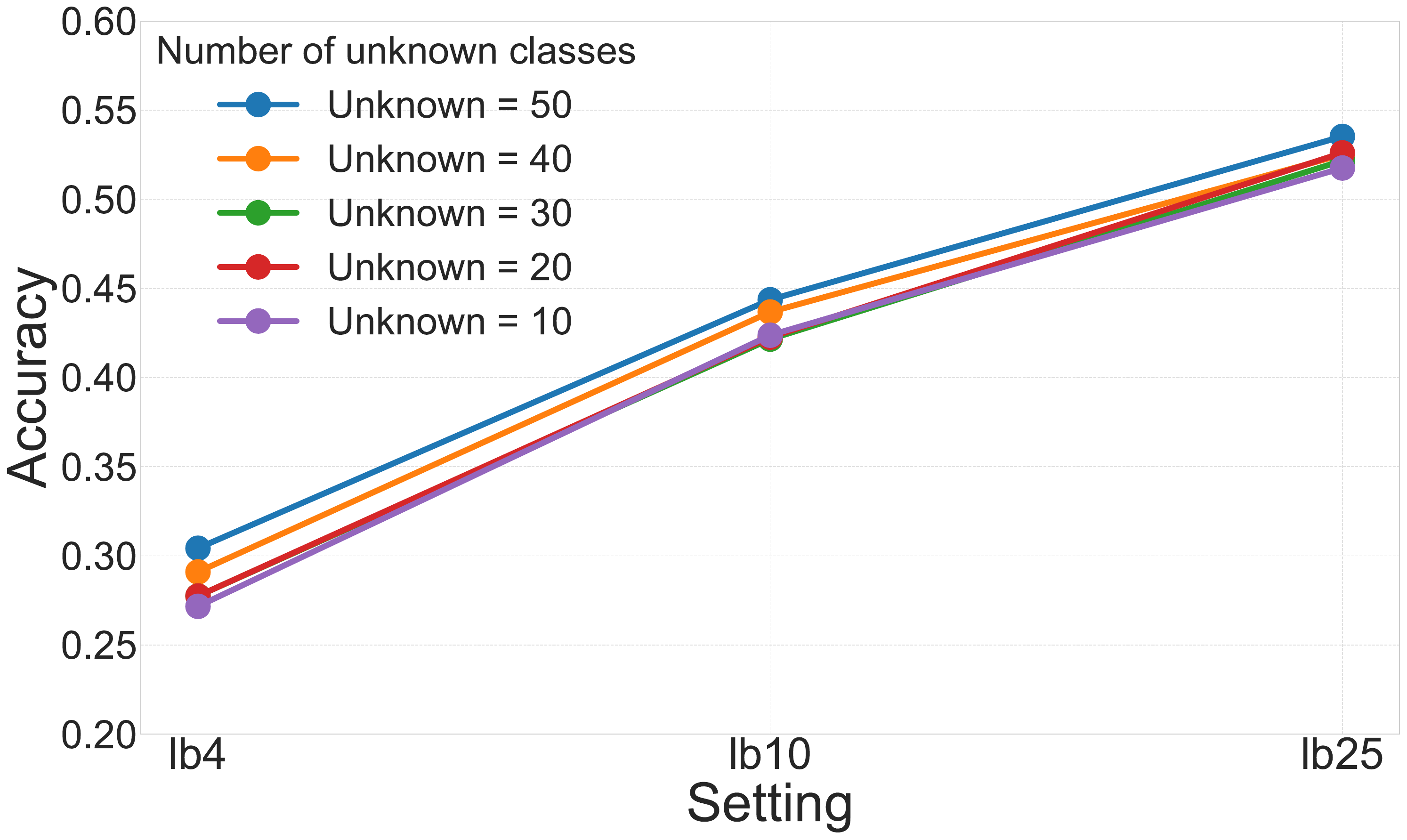}
            \caption{}
            \label{fig:acc_vs_unknown_classes}
        \end{subfigure}
        \hfill
        \begin{subfigure}{0.47\linewidth}
            \includegraphics[width=\linewidth]{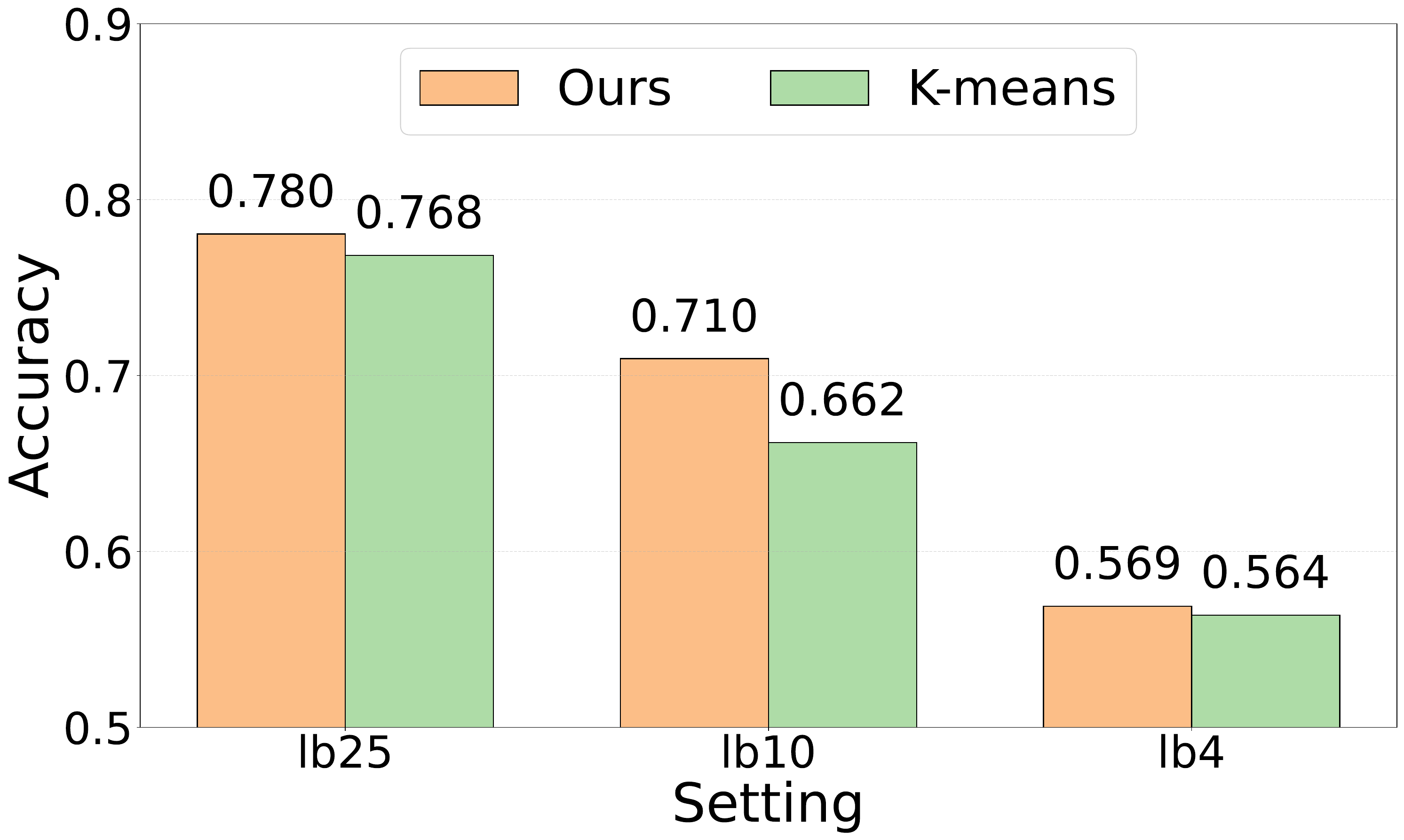}
            \caption{}
            \label{fig:kmeans_compare}
        \end{subfigure}
    \end{minipage}
    \caption{(a) reports classification accuracy on CIFAR-100 under different numbers of unknown classes (50, 40, 30, 20, 10) across the three lb settings; (b) compares our method with K-means clustering on the identified unknown class samples on CIFAR-10.}
    \label{fig:unknown_performance}
\end{figure}

\subsection{Statistical Significance}

We also conduct pairwise t‑tests ($\alpha$ = 0.05) against thirteen baselines under different label settings to evaluate the robustness of our approach and the full results are given in Table~\ref{tab:statistical_significance}. Our method achieves significant improvements over all LTSSL methods in \textbf{75\%} of cases and over all OSSL methods in \textbf{76.7\%} of cases, confirming the consistent superiority of our method.

\begin{table} [htbp]
\caption{Statistical significance of performance differences assessed with pairwise t-test at a 0.05 significance level, reported as win/tie/loss counts.}
\label{tab:statistical_significance}
\centering
\setlength{\tabcolsep}{0.9mm}
\begin{adjustbox}{width=0.38\columnwidth, center} 
\begin{tabular}{cccccc}
\toprule
\multicolumn{2}{c}{Method} & lb10 & lb25 & Total \\
\midrule
\multicolumn{1}{c|}{\multirow{2}{*}{\rotatebox[origin=c]{90}{\textcolor{Periwinkle}{\textbf{SSL}}}}} 
                      & FixMatch         & 2~/~3~/~0 & 1~/~4~/~0 & 3~/~7~/~0 \\
\multicolumn{1}{c|}{} & FreeMatch       & 1~/~4~/~0 & 1~/~4~/~0 & 2~/~8~/~0 \\
\multicolumn{1}{c|}{} & SoftMatch       & 1~/~4~/~0 & 2~/~3~/~0 & 3~/~7~/~0 \\
\midrule
\multicolumn{1}{c|}{\multirow{3}{*}{\rotatebox[origin=c]{90}{\textcolor{Periwinkle}{\textbf{LTSSL}}}}} 
                      &  SimPro            & 5~/~0~/~0 & 3~/~2~/~0 & 8~/~2~/~0 \\
\multicolumn{1}{c|}{} &  CDMAD              & 4~/~1~/~0 & 5~/~0~/~0 & 9~/~1~/~0 \\
\multicolumn{1}{c|}{} &  Meta-Expert  & 4~/~1~/~0 & 4~/~1~/~0 & 8~/~2~/~0 \\
\multicolumn{1}{c|}{} &  SC-SSL  & 2~/~3~/~0 & 3~/~2~/~0 & 5~/~5~/~0 \\
\midrule
\multicolumn{1}{c|}{\multirow{3}{*}{\rotatebox[origin=c]{90}{\textcolor{Periwinkle}{\textbf{OSSL}}}}}
                      & OpenMatch       & 4~/~1~/~0 & 5~/~0~/~0 & 9~/~1~/~0 \\
\multicolumn{1}{c|}{} & IOMatch           & 3~/~2~/~0 & 4~/~1~/~0 & 7~/~3~/~0 \\
\multicolumn{1}{c|}{} & SCOMatch         & 4~/~1~/~0 & 3~/~2~/~0 & 7~/~3~/~0 \\
\multicolumn{1}{c|}{} & CaliMatch       & 2~/~3~/~0 & 3~/~2~/~0 & 5~/~5~/~0 \\
\multicolumn{1}{c|}{} & ANEDL       & 5~/~0~/~0 & 5~/~0~/~0 & 10~/~0~/~0 \\
\multicolumn{1}{c|}{} & GGR       & 4~/~1~/~0 & 4~/~1~/~0 & 8~/~2~/~0 \\
\midrule
\rowcolor{blue!10} & \textbf{Total}  & 41~/~24~/~0 & 43~/~22~/~0 & 84~/~46~/~0 \\
\bottomrule
\end{tabular}
\end{adjustbox}
\end{table}

\section{Further Ablation Analyses and Discussions}
\label{sec:further_ablation_analyses_and_discussions}

To further validate the effectiveness of local structure regularization (LSR) and global structure regularization (GSR), we performed a fine-grained component-level analysis. As shown in Table~\ref{tab:further_ablation}, the integrated LSR module surpasses its individual component losses, namely $\mathcal{L}_{il}$ (LIL) and $\mathcal{L}_{ol}$ (LOL), by average margins of 1.51\% on CIFAR-10 and 0.83\% on CIFAR-100. Similarly, the complete GSR module achieves average improvements of 1.73\% on CIFAR-10 and 0.72\% on CIFAR-100 over its individual component losses $\mathcal{L}_{ig}$ (LIG) and $\mathcal{L}_{og}$ (LOG). These fine-grained ablation results further substantiate the efficacy of both LSR and GSR modules while demonstrating the complementary relationships among their individual component losses.

\begin{table}[htbp]
\centering
\caption{Classification accuracy (\%) with and without the $\mathcal{L}_{il}$ (LIL), $\mathcal{L}_{ol}$ (LOL), $\mathcal{L}_{ig}$ (LIG), and $\mathcal{L}_{og}$ (LOG) losses of the proposed method across three settings, with 4, 10, and 25 labeled samples per class. All ablation studies are conducted with the class-adaptive re-weighting (CAW) module integrated as the foundation. The datasets are CIFAR-10 and CIFAR-100, which comprise a balanced labeled dataset (imbalance ratio $\gamma_l = 1$) of only known classes and an imbalanced unlabeled dataset (imbalance ratio $\gamma_u = 100$) of both known and unknown classes.}
\label{tab:further_ablation}
\scalebox{0.82}{
\setlength{\tabcolsep}{1.25mm}
\begin{tabular}{ccccccccccc}
\toprule
\multicolumn{4}{c}{Module} & \multicolumn{3}{c}{CIFAR-10} & \multicolumn{3}{c}{CIFAR-100} \\
\cmidrule(lr){1-4} \cmidrule(lr){5-7} \cmidrule(lr){8-10}
 LIL & LOL & LIG & LOG & lb4 & lb10 & lb25 & lb4 & lb10 & lb25 \\
\midrule
\checkmark &            &            &            & 50.61 & 67.75 & 76.52 & 29.18 & 43.12 & 52.47 \\
           & \checkmark &            &            & 43.35 & 67.71 & 76.63 & 28.77 & 42.44 & 52.98 \\
\rowcolor{blue!10} \checkmark & \checkmark &            &            & 50.80 & 68.05 & 76.95 & 30.24 & 43.25 & 53.49 \\
\checkmark & \checkmark & \checkmark &            & 52.16 & 67.29 & 76.68 & 29.81 & 42.25 & 52.55 \\
\checkmark & \checkmark &            & \checkmark & 49.33 & 67.26 & 76.34 & 28.65 & 43.20 & 53.20 \\
\rowcolor{blue!10} \checkmark & \checkmark & \checkmark & \checkmark & 54.30 & 68.54 & 76.88 & 30.25 & 43.43 & 53.30 \\
\bottomrule
\end{tabular}}
\end{table}

\section{Compute Resources}
\label{sec:compute_resources}

\begin{itemize}[leftmargin=25pt]
\item CPU: AMD EPYC 7642 48-Core Processor × 2

\item GPU: NVIDIA GeForce RTX 4090 24G × 1

\item MEM: 500G

\item Maximum total computing time: training + testing $\approx$ 20h
\end{itemize}

\typeout{get arXiv to do 4 passes: Label(s) may have changed. Rerun}
\end{document}